\newtheorem{theorem}{Theorem}
\newtheorem*{theorem*}{Theorem}
\newtheorem{cor}{Corollary}
\newtheorem{remark}{Remark}
\title{Solving Long-run Average Reward Robust MDPs via Stochastic Games}
\author{
Krishnendu Chatterjee$^1$\and
Ehsan Kafshdar Goharshady$^1$\and
Mehrdad Karrabi$^1$\and\\
Petr Novotn\'y$^2$\and
\DJ or\dj e \v{Z}ikeli\'c$^3$\thanks{Part of the work done while the author was at the Institute of Science and Technology Austria (ISTA).}
\affiliations
$^1$Institute of Science and Technology Austria (ISTA), Austria\\
$^2$Masaryk University, Czech Republic\\
$^3$Singapore Management University, Singapore
\emails
\{krishnendu.chatterjee, ehsan.goharshady, mehrdad.karrabi\}@ist.ac.at, \\
petr.novotny@fi.muni.cz, dzikelic@smu.edu.sg
}
\newcommand{\rew}{r}
\newcommand{\Rset}{\mathbb{R}}
\newcommand{\dist}{\Delta}
\newcommand{\tran}{\delta}
\newcommand{\uncert}{\mathcal{P}}
\newcommand{\probm}{\mathbb{P}}
\newcommand{\expv}{\mathbb{E}}
\newcommand{\todoE}[1]{\textbf{\textcolor{red}{EG: TODO:  \emph{#1}}}}
\newcommand{\limavg}{\textsc{LimAvg}}
\newcommand{\val}{\mathit{Val}}
\newcommand{\rmdp}{\mathcal{M}}
\newcommand{\payoff}{\textsc{Payoff}}
\newcommand{\disc}{\textsc{Disc}}
\begin{document}

\maketitle

\begin{abstract}
    Markov decision processes (MDPs) provide a standard framework for sequential decision making under uncertainty. However, MDPs do not take uncertainty in transition probabilities into account. Robust Markov decision processes (RMDPs) address this shortcoming of MDPs by assigning to each transition an uncertainty set rather than a single probability value.
	In this work, we consider polytopic RMDPs in which all uncertainty sets are polytopes and study the problem of solving long-run average reward polytopic RMDPs. We present a novel perspective on this problem and show that it can be reduced to solving long-run average reward turn-based stochastic games with {\em finite state and action spaces}. This reduction allows us to derive several important consequences that were hitherto not known to hold for polytopic RMDPs.
    First, we derive new computational complexity bounds for solving long-run average reward polytopic RMDPs, showing for the first time that the threshold decision problem for them is in $\textsc{NP} \cap \textsc{coNP}$ and that they admit a randomized algorithm with sub-exponential expected runtime. Second, we present Robust Polytopic Policy Iteration (RPPI), a novel policy iteration algorithm for solving long-run average reward polytopic RMDPs. Our experimental evaluation shows that RPPI is much more efficient in solving long-run average reward polytopic RMDPs compared to state-of-the-art methods based on value iteration. 
\end{abstract}

\section{Introduction}\label{sec:intro}

\noindent{\bf Robust Markov decision processes.} Markov decision processes (MDPs) are widely adopted as a framework for sequential decision making in environments that exhibit stochastic uncertainty~\cite{Puterman94}. The goal of solving an MDP is to compute a policy which maximizes expected payoff with respect to some reward objective. However, classical algorithms for solving MDPs assume that transition probabilities are known and available. While this is plausible from a theoretical perspective, in practice this assumption is not always justified -- transition probabilities are typically inferred from data and as such their estimates come with a certain level of uncertainty. This has motivated the study of {\em robust Markov decision processes (RMDPs)}~\cite{NilimG03,Iyengar05}. RMDPs differ from MDPs in that they only assume knowledge of some uncertainty set containing true transition probabilities. The goal of solving RMDPs is to compute a policy which maximizes the {\em worst-case expected payoff} with respect to all possible choices of transition probabilities belonging to the uncertainty set. The advantage of considering RMDPs rather than MDPs for planning and sequential decision making tasks is that solving RMDPs provides policies that are {\em robust} to potential uncertainty in the underlying environment model.

\smallskip\noindent{\bf Long-run average reward RMDPs.} Recent years have seen significant interest in solving RMDPs. Two most fundamental reward objectives defined over unbounded or infinite time horizon trajectories are the {\em discounted-sum reward} and the {\em long-run average reward}. While there are many works on solving discounted-sum reward RMDPs (see an overview of related work below), the problem of solving long-run average reward RMDPs remains comparatively less explored. However, long-run average reward is, along with discounted-sum, one of the two fundamental objectives used in agent planning and reinforcement learning~\cite{Puterman94,SuttonMSM99}. Unlike discounted-sum, which places more emphasis on earlier rewards and prefixes of a run, long run average reward focuses on long-term sustainability of the reward signal.

\smallskip\noindent{\bf Prior work and challenges.} Most prior work on solving long-run average reward RMDPs focuses on special instances of RMDPs in which uncertainty sets are either products of intervals (i.e.~$L^\infty$-balls) or $L^1$-balls around some pivot probability distribution. These works typically focus on studying {\em Blackwell optimality} in RMDPs. In standard MDPs, Blackwell optimality~\cite{Puterman94} is a classical result which says that there exists some threshold discount factor $\gamma^\ast \in [0,1)$ such that the same policy maximizes expected discounted-sum reward for any discount factor $\gamma \in [\gamma^*,1)$ while also maximizing the long-run average reward in the MDP. This allows using the methods for discounted-sum reward MPDs to solve long-run average reward MDPs.

The work of~\cite{TewariB07} considers RMDPs in which uncertainty sets are products of intervals (also called interval MDPs~\cite{GivanLD00}), and is the first to establish Blackwell optimality for interval MDPs. It also proposes a value iteration algorithm for solving long-run average reward interval MDPs. The works~\cite{ClementP23,GoyalG23} establish Blackwell optimality for RMDPs with general polytopic uncertainty sets. Moreover, the work~\cite{ClementP23} designs value and policy iteration algorithms for solving long-run average reward RMDPs with interval or $L^1$-uncertainty sets. Finally, the recent work of~\cite{WangVAPZ23} establishes Blackwell optimality and proposes a value iteration algorithm for a general class of RMDPs in which uncertainty sets are only assumed to be compact, but in which the RMDP is assumed to be unichain.

While these works present significant advances in solving long-run average reward RMDPs, our understanding of this problem still suffers from a few limitations:
\begin{compactitem}
    \item {\em Computational complexity.} While prior works focus on studying Blackwell optimality in RMDPs and algorithms for solving them, no upper bound on the computational complexity of solving this problem is known.
    \item {\em Structural or uncertainty restrictions.} The value iteration algorithm of~\cite{WangVAPZ23} requires the RMDP to be unichain. On the other hand, other algorithms discussed above are applicable to RMDPs with either interval or $L^1$-uncertainty sets.
    \item {\em Policy iteration.} In classical MDP and stochastic games~\cite{Puterman94,HansenMZ13} and in discounted-sum RMDPs~\cite{SinhaG16,HoPW21}, the number of iterations of policy iteration is known to be more efficient than value iteration. However, for RMDPs with uncertainty sets that are not intervals or $L^1$-balls, the only available algorithm is that of~\cite{WangVAPZ23} which is based on value iteration. 
\end{compactitem}

\smallskip\noindent{\bf Our contributions.} This work focuses on computational complexity aspects and efficient algorithms for solving long-run average reward RMDPs and our goal is to address the limitations discussed above. To that end, we consider {\em polytopic RMDPs} in which all uncertainty sets are assumed to be polytopes and which unify and strictly subsume RMDPs with interval or $L^1$-uncertainty sets. We present a {\em new perspective} on the problem of solving long-run average reward polytopic RMDPs by showing that it admits a {\em linear-time reduction} to the problem of solving long-run average reward {\em turn-based stochastic games (TBSGs)} in which both state and action sets are finite. The reduction is linear in the size of the RMDP and the number of vertices of uncertainty polytopes. While the connection between RMDPs and TBSGs has been informally mentioned in several prior works, e.g.~\cite{NilimG03,TesslerEM19,GoyalG23}, we are not aware that any prior work has actually formalized this reduction. The reduction formalization turns out to be technically non-trivial and we provide a first formal proof of the correctness of the reduction. The reduction formalization is the main technical contribution of our work.

While definitely compelling, the reduction is not only of theoretical interest. TBSGs have been widely studied within computational game theory and formal methods communities~\cite{Shapley:stochastic-games,FV:book,Kucera:games-chapter} and this reduction allows us to leverage a plethora of results on TBSGs in order to establish new results on long-run average reward RMDPs that were hitherto not know to be true. In particular, our results allow us to overcome all the limitations discussed above. Our main contributions are as follows:
\begin{compactenum}
    \item We establish novel {\em complexity results} on solving long-run average reward polytopic RMDPs, showing for the first time that the threshold decision problem for them is in $\textsc{NP} \cap \textsc{coNP}$ and that they admit a randomized algorithm with sub-exponential expected runtime. Moreover, for RMDPs with constant number of actions and treewidth of the underlying graph, the problem admits an algorithm with quasi-polynomial runtime.
    \item By utilizing Blackwell optimality for TBSGs and the fact that policy iteration has already been shown to be sound for discounted-sum reward TBSGs, we propose {\em Robust Polytopic Policy Iteration (RPPI)}, our novel algorithm for solving long-run average reward polytopic RMDPs. Our algorithm {\em does not impose structural restrictions} on the RMDP such as unichain or aperiodic. 
    \item {\em Experimental evaluation.} We implement RPPI and experimentally compare it against the value iteration algorithm of~\cite{WangVAPZ23}. We show that RPPI achieves significant efficiency gains on unichain RMDPs while also being practically applicable to RMDPs without such structural restrictions.
\end{compactenum}

\smallskip\noindent{\bf Related work.} The related work on solving long-run average reward RMDPs was discussed above, hence we omit repetition. In what follows, we overview a larger body of existing works on solving discounted-sum reward RMDPs.  Existing approaches can be classified into model-based and model-free. Model-based approaches assume the full knowledge of the uncertainty set and compute optimal policies with exact values on the worst-case expected payoff that they achieve, see e.g.~\cite{NilimG03,Iyengar05,WiesemannKR13,KaufmanS13,TamarMX14,GhavamzadehPC16,HoPW21}. These works also provide deep theoretical insight into RMDPs and show that many classical results on MDPs transfer to the RMDP setting. Model-free approaches, on the other hand, do not assume the full knowledge of the uncertainty set. Rather, they learn the uncertainty sets from data and as such provide more scalable and practical algorithms, although without guarantees on optimality of learned policies~\cite{RoyXP17,TesslerEM19,WangZ21,WangZ22c,HoPW18,yang2022toward}.

In the final days of preparing this submission, we found a concurrent work of~\cite{grand2023beyond} which also explores the connection between long-run average reward RMDPs and TBSGs in order to study Blackwell optimality. Our focus in this work is on computational complexity aspects and efficient algorithms for solving this problem.  Furthermore, we also establish that our reduction is efficient, along with its formal correctness, that allows transfer of algorithms from the stochastic games literature to RMDPs.

\section{Preliminaries and Models}


For a set \( X \), we denote by \( \dist(X) \) the set of all probability distributions over \( X \). 


\smallskip\noindent{\bf MDPs.} A (finite-state, discrete-time) \emph{Markov decision process (MDP)}~\cite{Puterman94} is a tuple \( (S, A, \tran, \rew, s_0) \), where \( S \) is a finite set of \emph{states}, \( A \) is a finite set of \emph{actions}, \( \tran \colon S \times A \rightarrow \dist(S) \) is a probabilistic \emph{transition function}, \( \rew \colon S\times A \rightarrow \Rset\) is a \emph{reward function,} and \(s_0 \in S\) is the {\em initial state}.
We abbreviate the transition probability \( \delta(s,a)(s') \) by \( \delta(s' \mid s,a) \).

The dynamics of an MDP are defined through the notion of a policy. A {\em policy} is a map $\sigma: (S\times A)^{\ast} \times S \rightarrow \dist(A)$ which to each finite history of state-action pairs ending in a state assigns the probability distribution over the actions. Under a policy \( \sigma \), the MDP evolves as follows. The MDP starts in the initial state $s_0$. Then, in any time step \( t \in \{0,1,2,\ldots\} \), based on the current history $h_t = s_0,a_0,\dots,s_{t-1},a_{t-1},s_t$ the policy \( \sigma \) samples an action \( a_t \) to play according to the probability distribution $\sigma(h_t)$. The agent then receives the immediate reward \( r_t = \rew(s_t,a_t) \) and the MDP transitions into the next state \( s_{t+1} \) sampled according to the probability distribution \( \tran(s_t,a_t) \). The process continues in this manner \emph{ad infinitum.} A policy is said to be:
\begin{compactitem}
    \item {\em positional (or memoryless)}, if the choice of the probability distribution over actions depends only on the last state in the history, i.e.~if $\sigma(s_0,a_0,\dots,s_{t-1},a_{t-1},s_t) = \sigma(s_0',a_0',\dots,s_{t-1}',a_{t-1}',s_t')$ whenever $s_t = s_t'$;
    \item {\em pure}, if for every history $h$ the probability distribution $\sigma(h)$ assigns probability~$1$ to a single action in $A$.
\end{compactitem}



\smallskip\noindent{\bf Robust MDPs.} Robust MDPs generalize MDPs in that they prescribe to each state-action pair an {\em uncertainty set}, which contains a set of probability distributions over the states in $S$. Thus, MDPs present a special case of robust MDPs in which the uncertainty set prescribed to each state-action pair contains only a single probability distribution over the states.

Formally, a \emph{robust MDP (RMDP)}~\cite{NilimG03,Iyengar05} is a tuple \( (S, A, \uncert, \rew, s_0) \), where \( S \) is a finite set of \emph{states}, \( A \) is a finite set of \emph{actions}, \( \uncert \subseteq \dist(S)^{S \times A} \) is an {\em uncertainty set}, \( \rew \colon S\times A \rightarrow \Rset\) is a \emph{reward function,} and \(s_0 \in S\) is the initial state.

The intuition behind RMDPs is that, at each time step, the true transition function is chosen adversarially from the uncertainty set \( \uncert \). The dynamics of RMDPs are defined through a pair of policies -- the {\em agent policy} and the {\em environment (or adversarial) policy}. The agent policy $\sigma$ is defined analogously as in the case of MDPs above. On the other hand, the environment policy is a map $\pi: (S\times A)^{\ast} \times S \times A \rightarrow \uncert$ which to each finite history of state-action pairs assigns an element of the uncertainty set $\uncert$. Under the agent policy $\sigma$ and the environment policy $\pi$, the RMDP evolves analogously as an MDP defined above, with the difference that in each time step $t\in \{0,1,2,\dots\}$, the state $s_{t+1}$ is sampled from \(\tran_t(s_t,a_t)\), where \(\tran_t = \pi(s_0,a_0,\ldots,a_{t-1},s_t,a_t)\).

We note that our definition differs a bit from those found in the literature, where the environment policies have limited adaptiveness: they are either stationary \cite{HoPW21} or Markovian (i.e., time-dependent)~\cite{WangVAPZ23}. However, one of the corollaries of our results is that all three definitions lead to equivalent problems in the setting we consider.

We denote by \( \probm^{\sigma,\pi}_{\rmdp} \) and \( \expv^{\sigma,\pi}_{\rmdp} \) the probability measure and the expectation operator induced by the pair of policies \( \sigma, \pi \) over the space of infinite trajectories \(s_0,a_0,s_1,a_1,\ldots \) of \(\rmdp\).

\smallskip\noindent{\bf (s,a)-rectangularity.} In this work, we restrict to the study of \emph{\( (s,a) \)-rectangular uncertainty sets}~\cite{NilimG03,Iyengar05}, which are the most commonly studied class of uncertainty sets and are of the form \( \uncert = \prod_{(s,a)\in S\times A} \uncert_{(s,a)} \), where each \( \uncert_{(s,a)} \subseteq \dist(S) \). The assumption means that the environment can choose transition probabilites for each state-action pair independently from other state-action pairs. 

\smallskip\noindent{\bf Objectives} We consider RMDPs with respect to \emph{long-run average} (also known as \emph{mean payoff}) and {\em discounted-sum} objectives. For each trajectory \( \tau = s_0,a_0,s_1,a_1,\ldots\), we define its long-run average payoff as
\[
\limavg(\tau) = \liminf_{N\rightarrow \infty} \frac{1}{N+1}\cdot \sum_{i=0}^{N}\rew(s_i,a_i).
\]
For every \(\gamma\in(0,1)\), we define its \(\gamma\)-discounted payoff as
\[
\disc_\gamma(\tau) = \lim_{N\rightarrow \infty} \sum_{i=0}^{N}\gamma^i\cdot\rew(s_i,a_i).
\]

\noindent{\bf Problem statement.} The \emph{value} of an RMDP \(\rmdp\) given policies \(\sigma, \pi\) is the expected payoff incurred under the policies $\val(\rmdp,\sigma,\pi) = \expv^{\sigma,\pi}_{\rmdp}[\payoff]$, where \(\payoff\) is either \(\limavg\) or \(\disc_\gamma\).
The \emph{optimal value} of an RMDP \(\rmdp\) is the maximal expected payoff achievable by the agent against any environmental choices of transition functions:
\[\val^*(\rmdp) = \sup_{\sigma} \inf_{\pi} \val(\rmdp,\sigma,\pi).\]
An agent strategy \(\sigma^*\) is \emph{optimal} if it achieves the value against any environment policy, i.e., if $\inf_{\pi}\val(\rmdp,\sigma^*,\pi) = \val^*(\rmdp)$. Given an RMDP with either objective, our goal is to compute its value and a corresponding optimal policy.

\smallskip\noindent{\bf Assumption: polytopic RMDPs.} Towards making our problem computationally tractable, we need to assume a certain form of uncertainty sets in RMDPs. In this work, we consider a very general setting of {\em polytopic uncertainty sets}. An RMDP \(\rmdp =  (S, A, \uncert, \rew, s_0) \) is said to be {\em polytopic} if, for each state-action pair $(s,a) \in S \times A$, the uncertainty set $\uncert(s,a) \subseteq \mathbb{R}^{|S|}$ is a polytope (i.e.~a convex hull of finitely many points in $\mathbb{R}^{|S|}$). These strictly subsume RMDPs with $L^1$-uncertainty (or total variation) sets~\cite{HoPW21}, interval MDPs~\cite{GivanLD00} and contamination models~\cite{WangVAPZ23}. Furthermore, all our results are applicable to uncertainty sets which are {\em subsets of polytopes} so long as all polytope vertices are within the uncertainty set.

\section{Reduction to Turn-based Stochastic Games}
\label{sec:red}

This section presents the main result of this work. We show that the problem of solving long-run average reward polytopic RMDPs admits a linear-time reduction to the problem of solving long-run average reward turn-based stochastic games. The reduction is linear in the size of the RMDP and the number of vertices of uncertainty polytopes. In what follows, we first recall turn-based stochastic games. We then formally define and prove our reduction, which is the main theoretical result of this work. Finally, we use the reduction to obtain a sequel of novel results on RMDPs by leveraging results on turn-based stochastic games.

\subsection{Background on Turn-based Stochastic Games}

Turn-based stochastic games are a standard model of decision making in the presence of both adversarial agent and stochastic uncertainty~\cite{Shapley:stochastic-games,FV:book,Kucera:games-chapter}. 
Formally, a (finite-state, two-player, zero-sum) \emph{turn-based stochastic game (TBSG)} is a tuple \( \mathcal{G} =  (S,A,\delta,\rew, s_0) \) where all the elements are defined as in MDPs, with the additional requirement that the state set \( S \) is partitioned into sets \( S_{\max} \) and \( S_{\min}\) and the action set \( A \) is partitioned into sets \( A_{\max} \) and \( A_{\min}\). We say that these vertices and actions belong to players \emph{Max} and \emph{Min}, respectively.
Similarly to MDPs and RMDPs, the dynamics of TBSGs are defined through the notion of policies for each player. We distinguish between Max- and Min-histories depending on whether the history ends in a state belonging to Max or Min. 

The value of a policy pair, optimal value and optimal policy for long-run average and discounted-sum objectives are defined analogously as for RMDPs. We denote by $\Sigma$ and $\Pi$ the sets of all policies of Max and Min, respectively. Similarly, we denote by $\Sigma_p$ and $\Pi_p$ the sets of all pure positional policies of Max and Min, respectively. The following is a classical result on TBSGs that will be an important ingredient in constructing our reduction from polytopic RMDPs. Intuitively, it says that both players can achieve the optimal expected payoff by using pure positional policies. For long-run average objective, the result was proved by~\cite{Gillette:mean-payoff,LL:mean-payoff-SIAM}. For discounted-sum, positional determinacy was proved by~\cite{Shapley:stochastic-games}, pure positional determinacy is a simple consequence~\cite{stochgameschapter}.

\begin{theorem}[Pure positional determinacy] \label{thm:determinacy}
    Given a TBSG $\mathcal{G}$, the following equality holds for both long-run average and discounted-sum objectives:
    \begin{equation*}
    \begin{split}
        \val^* &(\mathcal{G}) = \sup_{\sigma \in \Sigma} \inf_{\pi \in \Pi} \val(\mathcal{G},\sigma,\pi) = \inf_{\pi \in \Pi} \sup_{\sigma \in \Sigma} \val(\mathcal{G},\sigma,\pi) \\
        &= \max_{\sigma \in \Sigma_p} \min_{\pi \in \Pi_p} \val(\mathcal{G},\sigma,\pi) = \min_{\pi \in \Pi_p} \max_{\sigma \in \Sigma_p} \val(\mathcal{G},\sigma,\pi).
    \end{split}
    \end{equation*}
\end{theorem}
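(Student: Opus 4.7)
The plan is to handle the two objectives separately, since the classical machinery differs. For the discounted-sum case, I would follow Shapley's approach via the one-step operator $T \colon \Rset^{|S|} \to \Rset^{|S|}$ defined, for $s \in S_{\max}$, by
\[
T(V)(s) = \max_{a \in A_{\max}(s)} \Bigl[ \rew(s,a) + \gamma \sum_{s' \in S} \tran(s' \mid s,a)\, V(s') \Bigr],
\]
and analogously with $\min$ over $A_{\min}(s)$ for $s \in S_{\min}$. Because $\gamma \in (0,1)$, $T$ is a $\gamma$-contraction in the sup-norm, so by the Banach fixed point theorem it has a unique fixed point $V^*$. Finiteness of $A$ means the $\max$ and $\min$ in $T$ are attained, yielding pure positional strategies $\sigma^\ast \in \Sigma_p$ and $\pi^\ast \in \Pi_p$ that play argmax/argmin at each state. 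A standard telescoping-unrolling argument then shows that $\sigma^\ast$ secures at least $V^\ast(s_0)$ against any environment policy $\pi \in \Pi$, and symmetrically for $\pi^\ast$, which collapses all four $\sup\inf$/$\inf\sup$/$\max\min$/$\min\max$ expressions to the common value $V^\ast(s_0)$.

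For the long-run average case, the plan is the vanishing-discount-factor route combined with Blackwell optimality for TBSGs. By the discounted case, for every $\gamma \in (0,1)$ there is a pair $(\sigma_\gamma,\pi_\gamma) \in \Sigma_p \times \Pi_p$ of pure positional optimal strategies. Since $\Sigma_p \times \Pi_p$ is finite, some pair $(\sigma^\ast,\pi^\ast)$ repeats along a sequence $\gamma_n \uparrow 1$; strengthening this to say that the same pair is discount-optimal for all $\gamma$ in a left neighborhood of $1$ is exactly Blackwell optimality for TBSGs, which follows from the algebraic (Puiseux) structure of the Shapley fixed point $V^*_\gamma$ in $1-\gamma$. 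A Tauberian argument then identifies $\lim_{\gamma \uparrow 1}(1-\gamma)\,V^\ast_\gamma(s_0)$ with the long-run average value under $(\sigma^\ast,\pi^\ast)$, and one verifies that the inequalities from the discounted case pass to the limit. The remaining step is to show that $\sigma^\ast$ guarantees this value not just against $\pi^\ast$ but against every (general, history-dependent) environment policy, using the fact that in the Markov chain induced by fixing $\sigma^\ast$ the opponent's best response is itself positional on the induced MDP.

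The main obstacle is the Blackwell-optimality step of the second case. The contraction argument in the discounted setting is routine, but promoting discount-optimality at a sequence of discount factors to joint optimality on an entire left neighborhood of $1$ requires either the Puiseux-series analysis of $V^\ast_\gamma$ or the combinatorial analysis of induced Markov chains of~\cite{Gillette:mean-payoff,LL:mean-payoff-SIAM}. Once a single pair $(\sigma^\ast,\pi^\ast) \in \Sigma_p \times \Pi_p$ is certified Blackwell-optimal, the chain of equalities in the theorem follows by combining (i) $\sup_\sigma \inf_\pi \le \inf_\pi \sup_\sigma$ (trivial), (ii) upper and lower bounds by $\val(\mathcal{G},\sigma^\ast,\pi^\ast)$ via standard best-response arguments on the MDPs obtained by fixing one player's positional strategy, and (iii) the observation that fixing a positional strategy of one player yields an MDP for which pure positional optimality of the other player is classical~\cite{Puterman94}. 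I would therefore structure the write-up to invoke the cited Blackwell/positional-determinacy results as a black box and focus only on verifying that the equalities in the statement follow from them.
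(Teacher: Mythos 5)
Your proposal is correct and is essentially the same route the paper takes: the paper does not prove Theorem~\ref{thm:determinacy} at all but imports it from the classical literature (Shapley's contraction/fixed-point argument for the discounted case, and Gillette and Liggett--Lippman for the mean-payoff case via Blackwell optimality), and your sketch is a faithful outline of exactly those proofs, correctly isolating the genuinely hard step (promoting discount-optimality along a sequence $\gamma_n \uparrow 1$ to Blackwell optimality on a left neighborhood of $1$) and deferring it to the same references the paper cites.
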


\noindent{\bf Comparison to RMDPs.} While the adversarial structure of TBSGs mimics the one of RMDPs, there is a crucial distinction: in TBSGs, both players may only select from {\em finitely many actions} at each step, while in RMDPs, the adversary can choose from a possibly {\em continuous set of distributions}. Hence, RMDPs do not immediately reduce to TBSGs.



\subsection{Reduction}
\label{subsec:reduction}



\noindent{\bf Intuition.} A polytopic RMDP $\rmdp$ induces a TBSG $\mathcal{G}_\rmdp$ as follows. The TBSG $\mathcal{G}_\rmdp$ is played between the agent and the environment players, which alternate in turns. The goal of the agent is to maximize long-run average reward, whereas the goal of the environment is to minimize it. Hence, the agent is the Max player and the environment is the Min player.

The agent and the environment players alternate in turns. First, the TBSG is in a state belonging to the agent player and the agent player selects an action to play. The state and the action sets belonging to the agent player in the TBSG coincide with the state and the action sets of the agent in the RMDP. Thus, the TBSG dynamics in states belonging to the agent player simulate the RMDP dynamics. On the other hand, the state and the action sets belonging to the environment player in the TBSG coincide with the set of all state-action pairs and the set of all vertices of uncertainty polytopes in the RMDP, respectively. Thus, the TBSG dynamics in states belonging to the environment player simulate the process in which a probabilistic transition function from which the next state will be sampled is selected from the uncertainty polytope in the RMDP. Note that pure strategies of the environment player can only select probabilistic transition functions that are vertices of the uncertainty polytopes, however any other element of the uncertainty polytope can be selected by using randomized strategies. Finally, the reward function of the TBSG also simulates that of the RMDP -- the rewards of state-action pairs belonging to the adversary player in the TBSG are twice the rewards in the RMDP, whereas the rewards induced by the environment player's moves in the TBSG are $0$. Thus, under every policy pair, the TBSG intuitively induces an infinite play whose long-run average is the same as that in the RMDP. In what follows, we formalize this construction.



\smallskip\noindent{\bf Formal definition.} Let \( \rmdp = (S^\rmdp, A^\rmdp, \uncert^\rmdp, \rew^\rmdp, s_0^\rmdp) \) be a polytopic RMDP. We define a TBSG \( \mathcal{G}_\rmdp =  (S^{\mathcal{G}_\rmdp},A^{\mathcal{G}_\rmdp},\delta^{\mathcal{G}_\rmdp},\rew^{\mathcal{G}_\rmdp}, s_0^{\mathcal{G}_\rmdp}) \) as below, and say that $\mathcal{G}_\rmdp$ is {\em induced} by $\rmdp$. In what follows, for each state-action pair $(s,a) \in S^\rmdp \times A^\rmdp$ in the RMDP $\rmdp$, we use $V^\rmdp_{s,a}$ to denote the set of all vertices of the uncertainty polytope $\uncert^\rmdp(s,a)$:
\begin{compactitem}
    \item {\em States.} $S^{\mathcal{G}_\rmdp}$ is defined as a union of two disjoint sets $S^{\mathcal{G}_\rmdp}_{\max}$ and $S^{\mathcal{G}_\rmdp}_{\min}$, where $S^{\mathcal{G}_\rmdp}_{\max} = S^\rmdp$ and $S^{\mathcal{G}_\rmdp}_{\min} = S^\rmdp \times A^\rmdp$
    are copies of the sets of all states and all state-action pairs in $\rmdp$.
    \item {\em Actions.} $A^{\mathcal{G}_\rmdp}$ is defined as a union of two disjoint sets $A^{\mathcal{G}_\rmdp}_{\max}$ and $A^{\mathcal{G}_\rmdp}_{\min}$, where
    $A^{\mathcal{G}_\rmdp}_{\max} = A^\rmdp$ and $A^{\mathcal{G}_\rmdp}_{\min} = \cup_{(s,a) \in S^\rmdp \times A^\rmdp} V^\rmdp_{s,a}$
    are copies of the sets of all actions and all vertices of uncertainty polytopes in $\rmdp$.
    \item {\em Transition function.} $\delta^{\mathcal{G}_\rmdp}: S^{\mathcal{G}_\rmdp}_{\max} \times A^{\mathcal{G}_\rmdp}_{\max} \cup S^{\mathcal{G}_\rmdp}_{\min} \times A^{\mathcal{G}_\rmdp}_{\min} \rightarrow \Delta(S^{\mathcal{G}_\rmdp})$ is defined as follows:
    \begin{compactitem}
        \item For state-action pairs $(s,a) \in S^{\mathcal{G}_\rmdp}_{\max} \times A^{\mathcal{G}_\rmdp}_{\max}$ belonging to Max, we let $\delta^{\mathcal{G}_\rmdp}(s,a)$ be the Dirac probability distribution over $\Delta(S^{\mathcal{G}_\rmdp})$ with all probability mass in the Min state $(s,a) \in S^{\mathcal{G}_\rmdp}_{\min}$, i.e.
        \begin{equation*}
        \delta^{\mathcal{G}_\rmdp}(s,a)(s') = \begin{cases}
            1, &\text{if } s' = (s,a) \in S^{\mathcal{G}_\rmdp}_{\min}, \\
            0, &\text{otherwise.}
        \end{cases}
        \end{equation*}
        \item For state-action pairs $((s,a),v) \in S^{\mathcal{G}_\rmdp}_{\min} \times A^{\mathcal{G}_\rmdp}_{\min}$ belonging to Min, we let $\delta^{\mathcal{G}_\rmdp}(s,a)$ be the probability distribution over $S^{\mathcal{G}_\rmdp}_{\max} = S^\rmdp$ induced by vertex $v$ of uncertainty polytope $\uncert^\rmdp(s,a)$, i.e.
        \begin{equation*}
        \delta^{\mathcal{G}_\rmdp}((s,a),v)(s') = \begin{cases}
            v(s'), &\text{if } s' \in S^{\mathcal{G}_\rmdp}_{\max}, \\
            0, &\text{otherwise.}
        \end{cases} 
        \end{equation*}
    \end{compactitem}
    \item {\em Reward function.} $r^{\mathcal{G}_\rmdp}: S^{\mathcal{G}_\rmdp}_{\max} \times A^{\mathcal{G}_\rmdp}_{\max} \cup S^{\mathcal{G}_\rmdp}_{\min} \times A^{\mathcal{G}_\rmdp}_{\min} \rightarrow \mathbb{R}$ is defined as follows:
    \begin{compactitem}
        \item For state-action pairs $(s,a) \in S^{\mathcal{G}_\rmdp}_{\max} \times A^{\mathcal{G}_\rmdp}_{\max}$, the reward is twice the reward incurred in the RMDP $\rmdp$, i.e.~$
        r^{\mathcal{G}_\rmdp}(s,a) = 2 \cdot r^{\rmdp}(s,a)$.
        \item For state-action pairs $((s,a),v) \in S^{\mathcal{G}_\rmdp}_{\min} \times A^{\mathcal{G}_\rmdp}_{\min}$, the reward is $0$, i.e.~$r^{\mathcal{G}_\rmdp}((s,a),v) = 0$.
    \end{compactitem}
    \item {\em Initial state.} $s_0^{\mathcal{G}_\rmdp} = s_0^{\rmdp} \in S^{\mathcal{G}_\rmdp}_{\max}$, i.e. the initial state in $\mathcal{G}_\rmdp$ is the copy of the initial state in $\rmdp$.
\end{compactitem}
The following theorem establishes reduction correctness. In the proof, considering polytopic RMDPs allows us to soundly prune the action space of the environment up to corners of the polytope. This is then used to reduce the RMDP to a TBSG with finite state and action space.

\begin{theorem}[Correctness]\label{thm:soundness}
    Consider a polytopic RMDP \( \rmdp = (S^\rmdp, A^\rmdp, \uncert^\rmdp, \rew^\rmdp, s_0^\rmdp) \) and define a TBSG \( \mathcal{G}_\rmdp =  (S^{\mathcal{G}_\rmdp},A^{\mathcal{G}_\rmdp},\delta^{\mathcal{G}_\rmdp},\rew^{\mathcal{G}_\rmdp}, s_0^{\mathcal{G}_\rmdp}) \) as above. Then, for long-run average objective we have $\val^* (\mathcal{\rmdp}) = \val^* (\mathcal{G}_\rmdp)$.
\end{theorem}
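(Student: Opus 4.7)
My plan is to prove $\val^*(\rmdp) = \val^*(\mathcal{G}_\rmdp)$ by two separate inequalities, each established through a translation of policies between $\rmdp$ and $\mathcal{G}_\rmdp$ that preserves the induced distribution over RMDP state--action sequences, together with pure positional determinacy of TBSGs (Theorem~\ref{thm:determinacy}) which lets me fix an optimal pure positional strategy on one side and argue against arbitrary strategies of the opponent.

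For the direction $\val^*(\rmdp) \ge \val^*(\mathcal{G}_\rmdp)$, I would first take an optimal pure positional Max policy $\sigma^*$ in $\mathcal{G}_\rmdp$; since the Max-states of $\mathcal{G}_\rmdp$ are exactly the states of $\rmdp$, it directly lifts to a positional agent policy $\sigma^\rmdp$. Given any environment policy $\pi^\rmdp$ in $\rmdp$, at each history $h$ ending in $(s,a)$ it outputs some $d \in \uncert^\rmdp(s,a)$; since this set is a polytope with vertex set $V^\rmdp_{s,a}$, I can write $d = \sum_{v \in V^\rmdp_{s,a}} \alpha_v(h)\, v$ with $\alpha_v(h) \ge 0$ and $\sum_v \alpha_v(h) = 1$. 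I then define a randomized, history-dependent Min policy $\pi^{\mathcal{G}}$ in $\mathcal{G}_\rmdp$ that, on the Min-history mirroring $h$, plays vertex $v$ with probability $\alpha_v(h)$. An induction on history length shows that the pushforward measures on the common RMDP state--action space agree under $(\sigma^\rmdp,\pi^\rmdp)$ and under $(\sigma^*,\pi^{\mathcal{G}})$, reducing the inequality to $\val(\mathcal{G}_\rmdp,\sigma^*,\pi^{\mathcal{G}}) \ge \val^*(\mathcal{G}_\rmdp)$, which holds since $\sigma^*$ is Max-optimal against every Min policy.

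The reverse direction will be easier: an optimal pure positional Min policy $\pi^*$ in $\mathcal{G}_\rmdp$ already selects a single vertex $\pi^*((s,a)) \in V^\rmdp_{s,a} \subseteq \uncert^\rmdp(s,a)$ at each Min-state, so it directly defines a stationary environment policy $\pi^\rmdp$ in $\rmdp$. For any agent policy $\sigma^\rmdp$ I can lift it to a Max policy $\sigma^{\mathcal{G}}$ in $\mathcal{G}_\rmdp$ mirroring its choices at the corresponding Max-histories; the same trajectory-matching argument then yields $\val(\rmdp,\sigma^\rmdp,\pi^\rmdp) = \val(\mathcal{G}_\rmdp,\sigma^{\mathcal{G}},\pi^*) \le \val^*(\mathcal{G}_\rmdp)$, and taking the supremum over $\sigma^\rmdp$ completes this inequality.

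The hard part will be reconciling the reward reshaping with the $\liminf$-based definition of $\limavg$: each RMDP step unfolds into two TBSG steps, a Max step with reward $2\rew^\rmdp(s,a)$ and a Min step with reward $0$. If $b_k = \frac{1}{k+1}\sum_{j=0}^k \rew^\rmdp(s_j,a_j)$ denotes the RMDP partial average after $k{+}1$ rewards, then along the matched TBSG trajectory the partial averages are exactly $b_k$ after $2k+2$ steps and $\frac{2(k+1)}{2k+1} b_k$ after $2k+1$ steps; since $\rew^\rmdp$ is bounded the additive discrepancy $\frac{1}{2k+1}b_k$ vanishes, so the two $\liminf$s coincide trajectory-by-trajectory. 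Composing this pointwise payoff equality with the policy translations above, and then taking expectations, delivers the theorem.
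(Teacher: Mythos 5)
Your proof is correct, and it rests on the same three pillars as the paper's own argument: a policy translation between $\rmdp$ and $\mathcal{G}_\rmdp$ that makes the induced distributions over RMDP state--action sequences coincide, the realization of an arbitrary $d\in\uncert^\rmdp(s,a)$ as a convex combination $\sum_v\alpha_v v$ of polytope vertices played as a randomized Min choice, and pure positional determinacy of TBSGs (Theorem~\ref{thm:determinacy}). The organization is genuinely different, though. The paper defines global maps $\Phi:\Sigma^{\rmdp}\to\Sigma^{\mathcal{G}_\rmdp}$ and $\Psi:\Pi^{\rmdp}\to\Pi^{\mathcal{G}_\rmdp}$, proves value preservation for \emph{every} policy pair, shows that these restrict to bijections on pure positional policies, and then extracts the equality of values from one chain of inequalities plus the existence of optimal pure positional game policies. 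You instead prove the two inequalities separately, each anchored at an optimal pure positional policy of one player in $\mathcal{G}_\rmdp$ transported to $\rmdp$, translating only the opponent's arbitrary policy back into the game. This buys you two things: you never need the bijection step, and you only need one-directional translations at a time. You also make explicit two points the paper leaves implicit: (i) that the environment's choice must be decomposed into vertex weights $\alpha_v(h)$ and realized by a randomized, history-dependent Min strategy (the paper's $\Psi$ silently ``interprets'' $\pi(h')$ as a distribution over vertices), and (ii) the bookkeeping showing that doubling the Max rewards while interleaving zero-reward Min steps leaves the trajectory-wise $\liminf$ unchanged, via the $\tfrac{2(k+1)}{2k+1}b_k$ computation on the odd subsequence; the paper handles this only by an appeal to dominated convergence on prefix averages. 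One point worth flagging, though it is not a gap: when you claim $\val(\mathcal{G}_\rmdp,\sigma^*,\pi^{\mathcal{G}})\ge\val^*(\mathcal{G}_\rmdp)$ for your constructed $\pi^{\mathcal{G}}$, you are using that the pure positional $\sigma^*$ is optimal against \emph{all} randomized history-dependent Min policies, not merely the pure positional ones --- this is exactly what Theorem~\ref{thm:determinacy} (read as asserting the existence of such uniformly optimal policies) provides, but since your $\pi^{\mathcal{G}}$ is neither pure nor positional, the full strength of that statement is needed.
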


\begin{proof}[Proof sketch, full proof in Appendix~\ref{app:soundnessproof}]
    Denote by $\Sigma^{\rmdp}$ and $\Pi^{\rmdp}$ the sets of all policies of the agent and the environment in the polytopic RMDP, and similarly define $\Sigma^{\mathcal{G}_\rmdp}$ and $\Pi^{\mathcal{G}_\rmdp}$ in the TBSG. We use $\Sigma^{\rmdp}_p$, $\Pi^{\rmdp}_p$, $\Sigma^{\mathcal{G}_\rmdp}_p$ and $\Pi^{\mathcal{G}_\rmdp}_p$ to denote the subsets of pure positional policies. In the full proof, we first define mappings from the sets of agent and environment policies in the polytopic RMDP to the sets of agent player and environment player policies in the TBSG:
    \begin{equation*}
        \Phi: \Sigma^{\rmdp} \rightarrow \Sigma^{\mathcal{G}_\rmdp}, \hspace{1em}
        \Psi: \Pi^{\rmdp} \rightarrow \Pi^{\mathcal{G}_\rmdp},
    \end{equation*}
    We then show that these mappings preserve the values of policy pairs under long-run average objective, i.e.~that
    \begin{equation*}
        \val(\rmdp,\sigma,\pi) = \val(\mathcal{G}_\rmdp,\Phi(\sigma),\Psi(\pi))
    \end{equation*}
    hold for each $\sigma \in \Sigma^{\rmdp}$ and $\pi \in \Pi^{\rmdp}$. Finally, we show that when restricted to pure positional policies, these mappings give rise to {\em one-to-one correspondences} (i.e.~{\em bijections})
    \begin{equation*}
        \Phi_p: \Sigma^{\rmdp}_p \rightarrow \Sigma^{\mathcal{G}_\rmdp}_p, \hspace{1em}
        \Psi_p: \Pi^{\rmdp}_p \rightarrow \Pi^{\mathcal{G}_\rmdp}_p.
    \end{equation*}
    Since by Theorem~\ref{thm:determinacy} there exist pure positional policies that achieve the optimal value for each player in the TBSG, we can conclude that there exist pure positional policies that achieve the optimal value in the polytopic RMDP and that the optimal values in the polytopic RMDP and the TBSG coincide.
\end{proof}


In order to study the computational complexity of the reduction, we first need to fix a representation of polytopic RMDPs. In particular, we assume that each polytopic RMDP is represented as an ordered tuple \( \rmdp = (S^\rmdp, A^\rmdp, \uncert^\rmdp, \rew^\rmdp, s_0^\rmdp) \), where $S^\rmdp$ is a list of states, $A^\rmdp$ is a list of actions, $\uncert^\rmdp$ is a list of polytope vertices for each state-action pair, $\rew^\rmdp$ is a list of rewards for each state-action pair and $s_0^\rmdp$ is an element of the state list. The following theorem shows that the above is a linear-time reduction in the size of the RMDP and the number of vertices of uncertainty polytopes.

\begin{theorem}[Complexity, proof in Appendix~\ref{app:complexityproof}]\label{thm:complexity}
    Consider a polytopic RMDP \( \rmdp = (S^\rmdp, A^\rmdp, \uncert^\rmdp, \rew^\rmdp, s_0^\rmdp) \) and define a TBSG \( \mathcal{G}_\rmdp =  (S^{\mathcal{G}_\rmdp},A^{\mathcal{G}_\rmdp},\delta^{\mathcal{G}_\rmdp},\rew^{\mathcal{G}_\rmdp}, s_0^{\mathcal{G}_\rmdp}) \) as above. Then, the size of $\mathcal{G}_\rmdp$ is linear in the size of $\rmdp$ and the number of vertices of uncertainty polytopes.
\end{theorem}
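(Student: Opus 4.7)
The plan is to fix an explicit representation of the TBSG that matches the representation of the RMDP spelled out in the excerpt, and then bound the size of each component (state set, action set, transition function, reward function, initial state) individually by a linear function of the RMDP size and the total number of polytope vertices $V := \sum_{(s,a) \in S^\rmdp \times A^\rmdp} |V^\rmdp_{s,a}|$. The size of the RMDP itself, under the tuple encoding described just before the theorem, is dominated by the state list ($|S^\rmdp|$), the action list ($|A^\rmdp|$), the reward table ($|S^\rmdp|\cdot|A^\rmdp|$ entries), and the vertex list, where each vertex is a probability distribution over $S^\rmdp$ and thus has size $O(|S^\rmdp|)$. So the RMDP size is $\Theta(|S^\rmdp|\cdot |A^\rmdp| + V \cdot |S^\rmdp|)$, which is the benchmark my linear bound must beat.

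I would then go through the construction piece by piece. For the state set, I bound $|S^{\mathcal{G}_\rmdp}| = |S^{\mathcal{G}_\rmdp}_{\max}| + |S^{\mathcal{G}_\rmdp}_{\min}| = |S^\rmdp| + |S^\rmdp|\cdot|A^\rmdp|$, which is linear in the RMDP size. For the action set, I use the indexed disjoint union interpretation: $|A^{\mathcal{G}_\rmdp}_{\max}| = |A^\rmdp|$, and $|A^{\mathcal{G}_\rmdp}_{\min}| \le V$ since each vertex of each polytope contributes one action. For the transition function, at each Max state-action pair $(s,a)$ only one Dirac entry is needed, accounting for $|S^\rmdp|\cdot|A^\rmdp|$ transitions; at each Min state $(s,a)$, only the $|V^\rmdp_{s,a}|$ vertex actions are applicable, and each such transition distribution has support on at most $|S^\rmdp|$ successors, yielding $O(V\cdot|S^\rmdp|)$ total entries. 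For the reward function, Max rewards are simply twice the RMDP rewards (same table size $|S^\rmdp|\cdot|A^\rmdp|$), and Min rewards are all $0$ and can be stored implicitly. The initial state is a single pointer. Adding these up gives a size of $O(|S^\rmdp|\cdot|A^\rmdp| + V\cdot|S^\rmdp|)$, matching the RMDP size up to a constant factor.

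The only subtlety I expect is notational: the construction writes $A^{\mathcal{G}_\rmdp}_{\min} = \bigcup_{(s,a)} V^\rmdp_{s,a}$ as a set union, but the cardinality bound $V$ requires interpreting the union as a disjoint (indexed) union, which is justified because at each Min state $(s,a)$ only actions in the copy $V^\rmdp_{s,a}$ are legal and transitions from different Min states are specified independently. I would state this interpretation explicitly at the start and then carry out the component-wise bookkeeping above. No nontrivial mathematics is involved once the representations are pinned down, so the proof in the appendix is essentially a careful enumeration of these linear bounds.
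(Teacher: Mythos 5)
Your proposal is correct and follows essentially the same route as the paper's proof: a component-by-component enumeration bounding the sizes of the state set, action set, transition function, reward function, and initial state against the tuple encoding of the RMDP, with the Min-player transitions charged to the stored vertex distributions (the paper's $|\uncert^\rmdp|$ playing the role of your $V\cdot|S^\rmdp|$). Your explicit remark that $A^{\mathcal{G}_\rmdp}_{\min}$ must be read as a disjoint indexed union is a minor point the paper leaves implicit, but it does not change the argument.
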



\subsection{Discussion and Implications}

Our reduction enables us to leverage the rich repository of results about TBSGs achieved within the computational game theory and formal methods communities. Since our reduction yields a well-defined and effective correspondence between policies in the polytopic RMDP and the TBSG, the results for TBSGs naturally carry over to the polytopic RMDP setting. In this subsection we highlight several such results, focusing on the issues of complexity and efficient algorithms.

\smallskip\noindent{\bf Pure positional determinacy.} First, as an immediate corollary of our reduction and of Theorem~\ref{thm:determinacy}, we get the following result. While this result was already established for polytopic RMDPs by a direct analysis and without the reduction to TBSGs~\cite{GoyalG23}, we state it below as it will be used for proving subsequent results.

\begin{cor}
\label{cor:rmdp-determinacy}
In a polytopic RMDP under long-run average objective, both the agent and the environment have pure positional optimal policies. 
\end{cor}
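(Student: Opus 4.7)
The plan is to derive the corollary by combining the reduction of Theorem~\ref{thm:soundness} with the pure positional determinacy of TBSGs (Theorem~\ref{thm:determinacy}). The key ingredient, supplied by the proof of Theorem~\ref{thm:soundness}, is the pair of value-preserving maps $\Phi \colon \Sigma^\rmdp \to \Sigma^{\mathcal{G}_\rmdp}$ and $\Psi \colon \Pi^\rmdp \to \Pi^{\mathcal{G}_\rmdp}$, which restrict to \emph{bijections} $\Phi_p$ and $\Psi_p$ on the corresponding sets of pure positional policies. Thus pure positional policies in $\rmdp$ and $\mathcal{G}_\rmdp$ are in exact correspondence, and we can transfer optimal pure positional policies from the TBSG back to the RMDP.

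Concretely, I would first apply Theorem~\ref{thm:determinacy} to $\mathcal{G}_\rmdp$ to obtain pure positional policies $\hat{\sigma} \in \Sigma^{\mathcal{G}_\rmdp}_p$ and $\hat{\pi} \in \Pi^{\mathcal{G}_\rmdp}_p$ that are optimal for Max and Min respectively. Set $\sigma^* = \Phi_p^{-1}(\hat{\sigma}) \in \Sigma^\rmdp_p$ and $\pi^* = \Psi_p^{-1}(\hat{\pi}) \in \Pi^\rmdp_p$; these are pure positional policies in $\rmdp$ by construction. I would then verify that $\sigma^*$ achieves the optimal value against every (not merely pure positional) environment policy. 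For any $\pi \in \Pi^\rmdp$, value-preservation of the maps yields
\begin{equation*}
\val(\rmdp, \sigma^*, \pi) \;=\; \val\bigl(\mathcal{G}_\rmdp, \Phi(\sigma^*), \Psi(\pi)\bigr) \;=\; \val\bigl(\mathcal{G}_\rmdp, \hat{\sigma}, \Psi(\pi)\bigr) \;\geq\; \val^*(\mathcal{G}_\rmdp) \;=\; \val^*(\rmdp),
\end{equation*}
where the inequality uses optimality of $\hat{\sigma}$ in $\mathcal{G}_\rmdp$ and the last equality is Theorem~\ref{thm:soundness}. Taking the infimum over $\pi$ gives $\inf_\pi \val(\rmdp, \sigma^*, \pi) \geq \val^*(\rmdp)$; the reverse inequality is immediate from the definition of $\val^*(\rmdp)$. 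Hence $\sigma^*$ is an optimal pure positional agent policy.

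For the environment side, the argument is symmetric: for any $\sigma \in \Sigma^\rmdp$,
\begin{equation*}
\val(\rmdp, \sigma, \pi^*) \;=\; \val\bigl(\mathcal{G}_\rmdp, \Phi(\sigma), \hat{\pi}\bigr) \;\leq\; \val^*(\mathcal{G}_\rmdp) \;=\; \val^*(\rmdp),
\end{equation*}
using that $\hat{\pi}$ is optimal for Min in the (determined) TBSG $\mathcal{G}_\rmdp$, so $\sup_\sigma \val(\mathcal{G}_\rmdp, \sigma, \hat{\pi}) = \val^*(\mathcal{G}_\rmdp)$. Taking the supremum over $\sigma$ shows that $\pi^*$ caps the agent's achievable value at $\val^*(\rmdp)$, which together with $\val^*(\rmdp) = \sup_\sigma \inf_\pi \val(\rmdp, \sigma, \pi)$ and the agent-side bound established above implies $\sup_\sigma \val(\rmdp, \sigma, \pi^*) = \val^*(\rmdp)$. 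The only subtlety is making sure that value-preservation of $\Phi$ and $\Psi$ holds on \emph{all} policies, not merely pure positional ones; this is precisely what is asserted by the middle step of the proof sketch of Theorem~\ref{thm:soundness}, so no new work is required here. The main obstacle, therefore, is not in this corollary itself but already discharged in the proof of Theorem~\ref{thm:soundness}.
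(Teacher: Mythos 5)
Your proposal is correct and follows essentially the same route as the paper: the paper treats this corollary as an immediate consequence of the reduction, with the substance already contained in Step~4 of the proof of Theorem~\ref{thm:soundness}, where optimal pure positional TBSG policies $\sigma^\ast,\pi^\ast$ are pulled back through the bijections $\Phi_p^{-1},\Psi_p^{-1}$ and shown to achieve the optimal payoff in $\rmdp$. You merely spell out more explicitly the verification that the pulled-back policies are optimal against \emph{all} (history-dependent, randomized) opposing policies via the value-preservation of $\Phi$ and $\Psi$, which is exactly the ingredient the paper's Step~2 supplies.
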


\noindent{\bf Problem complexity.} The decision problems associated with solving long-run average reward TBSGs (such as deciding whether the optimal value of the game is at least a given threshold \(v\)) are known to be in \(\textsc{NP} \cap \textsc{coNP}\). This is due to Theorem~\ref{thm:determinacy}: for \textsc{NP} membership, one can guess an optimal pure positional policy of, say player Max, and then verify that it enforces value \(\geq v\): the verification can be done in polynomial time since fixing a pure positional policy in an RMDP yields a standard MDP that can be solved in polynomial time e.g. by linear programming~\cite{Puterman94}. The argument for \textsc{coNP} membership is dual. By Corollary~\ref{cor:rmdp-determinacy}, the same characterization holds for RMDPs.

\begin{cor}
\label{cor:complexity}
Given an RMDP \(\rmdp\) under long-run average objective and a threshold \(v \in \mathbb{Q}\), the problem whether \(\val^*(\rmdp) \geq v\) belongs to \(\textsc{NP} \cap \textsc{coNP}\).
\end{cor}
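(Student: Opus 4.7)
The plan is to combine the reduction from Theorem~\ref{thm:soundness} with the pure positional determinacy of Corollary~\ref{cor:rmdp-determinacy}, mirroring the classical $\textsc{NP}\cap\textsc{coNP}$ argument for long-run average TBSGs. The small witnesses will be pure positional policies of either player in $\mathcal{G}_\rmdp$, and verification will amount to solving a one-player long-run average MDP via linear programming.

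For the $\textsc{NP}$ direction, I would non-deterministically guess a pure positional agent policy $\sigma^*$ in $\mathcal{G}_\rmdp$, which is essentially a map $S^\rmdp \to A^\rmdp$ and hence has description size polynomial in $|\rmdp|$. Fixing $\sigma^*$ collapses $\mathcal{G}_\rmdp$ to a single-player MDP controlled only by Min, whose actions are the polytope vertices listed in the representation of $\uncert^\rmdp$; by Theorem~\ref{thm:complexity} this MDP is of polynomial size. Its optimal long-run average value is computable in polynomial time via the standard LP for mean-payoff MDPs~\cite{Puterman94}, and the verifier accepts iff this value is at least $v$. Correctness then follows by chaining Theorem~\ref{thm:soundness} with Corollary~\ref{cor:rmdp-determinacy}: some $\sigma^*$ is accepted iff $\val^*(\mathcal{G}_\rmdp) \geq v$, which is equivalent to $\val^*(\rmdp) \geq v$.

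The $\textsc{coNP}$ direction is dual: I would show that the complement problem $\val^*(\rmdp) < v$ lies in $\textsc{NP}$. Here the witness is a pure positional environment policy $\pi^*$ in $\mathcal{G}_\rmdp$, which, by the construction in Section~\ref{subsec:reduction}, amounts to selecting, for each pair $(s,a)\in S^\rmdp\times A^\rmdp$, a single vertex of $\uncert^\rmdp(s,a)$; this is again of polynomial size in $|\rmdp|$. Fixing $\pi^*$ yields a polynomial-size one-player MDP for the agent, whose optimal long-run average value is computable in polynomial time by LP. The verifier accepts iff this value is strictly less than $v$, and correctness again follows from Theorem~\ref{thm:soundness} and Corollary~\ref{cor:rmdp-determinacy}.

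The guess-and-check skeleton is textbook, so the main obstacle I anticipate is verifying that every ingredient is genuinely polynomial in the original input to the RMDP problem. This hinges on two observations: first, the explicit vertex-list representation of the uncertainty polytopes fixed before Theorem~\ref{thm:complexity} ensures both that $\mathcal{G}_\rmdp$ has polynomial size and that pure positional environment policies are polynomially describable; second, once either player's policy is fixed in $\mathcal{G}_\rmdp$, the residue is an ordinary finite-state MDP with rational transitions, so its mean-payoff value is rational and exactly computable in polynomial time, and the comparison to the rational threshold $v$ can be performed exactly. These two points close the potential gap between the abstract TBSG argument and the concrete RMDP input.
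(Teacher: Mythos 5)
Your proposal is correct and follows essentially the same route as the paper: the paper likewise argues $\textsc{NP}$ membership by guessing a pure positional Max policy, verifying it by solving the resulting one-player long-run average MDP in polynomial time via linear programming, dualizes for $\textsc{coNP}$, and transfers the result to RMDPs through the reduction and pure positional determinacy (Corollary~\ref{cor:rmdp-determinacy}). Your additional care about the vertex-list representation guaranteeing polynomial witness size is a welcome but minor elaboration of what the paper leaves implicit via Theorem~\ref{thm:complexity}.
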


\noindent{\bf Efficient algorithms.} \emph{Simple stochastic games (SSGs)} \cite{Condon:simple-stochastic-games-IC} are a special class of TBSGs with reachability objectives that captures the complexity of the whole TBSG class. It was shown by~\cite{AndersonMiltersen:Stoch-Games-complexity} that solving TBSGs with both discounted and long-run average payoffs is polynomial-time reducible to solving SSGs (where by solving we mean computing optimal values and policies). 

While SSGs are not known to be solvable in polynomial time, several sub-exponential algorithms for them were developed. The randomized algorithm due to~\cite{Ludwig:SSG-randomized-IC} solves a given SSG in time \(\mathcal{O}(2^{\sqrt{\min\{|S_{max}|,|S_{min}|\}}}\cdot \mathit{poly{(|\mathcal{G}|)}})\), where \(|\mathcal{G}|\) is the encoding size of the game. Neither our reduction from RMDPs to TBSGs nor the reduction from TBSGs to SSGs~\cite{AndersonMiltersen:Stoch-Games-complexity} add any player Max vertices. Hence, we get the following result.

\begin{cor}
\label{cor:subexp}
There is a randomized algorithm for solving long-run average reward polytopic RMDPs  whose expected runtime is \({\mathcal{O}(2^{\sqrt{|S|}}\cdot \mathit{poly{(|\rmdp|)}})}\), i.e., sub-exponential.
\end{cor}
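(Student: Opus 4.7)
The plan is to chain together three results: the reduction from polytopic RMDPs to TBSGs established in Theorem~\ref{thm:soundness} and Theorem~\ref{thm:complexity}, the polynomial-time reduction from long-run average TBSGs to simple stochastic games (SSGs) of Andersen and Miltersen, and Ludwig's randomized sub-exponential algorithm for SSGs. The core quantitative observation driving the bound is that the parameter \(|S_{\max}|\) of the target SSG is controlled by \(|S^{\rmdp}|\) through both reductions.

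First, given a polytopic RMDP \(\rmdp\) with state set \(S = S^{\rmdp}\), I would invoke Theorem~\ref{thm:complexity} to build the induced TBSG \(\mathcal{G}_\rmdp\) in time linear in \(|\rmdp|\) plus the total number of uncertainty-polytope vertices. Inspecting the construction in Section~\ref{subsec:reduction}, I would point out explicitly that \(S^{\mathcal{G}_\rmdp}_{\max} = S^{\rmdp}\), so the number of Max vertices is exactly \(|S|\) (the extra blow-up caused by the reduction lives entirely on the Min side, namely the state-action pairs and the polytope vertices). By Theorem~\ref{thm:soundness}, \(\val^{*}(\rmdp) = \val^{*}(\mathcal{G}_\rmdp)\), and via the bijections \(\Phi_p, \Psi_p\) from the proof sketch, an optimal pure positional policy for Max in \(\mathcal{G}_\rmdp\) pulls back to an optimal policy in \(\rmdp\).

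Second, I would apply the reduction of Andersen and Miltersen to convert the long-run average TBSG \(\mathcal{G}_\rmdp\) into an equivalent SSG \(\mathcal{G}'\) in polynomial time. The key invariant to extract from that reduction, already flagged in the paragraph preceding the corollary, is that no new Max vertices are introduced, so that \(|S'_{\max}| \leq |S^{\mathcal{G}_\rmdp}_{\max}| = |S|\). I would then run Ludwig's randomized algorithm on \(\mathcal{G}'\), obtaining an optimal pure positional policy pair in expected time
\[
\mathcal{O}\!\left(2^{\sqrt{\min\{|S'_{\max}|,\,|S'_{\min}|\}}}\cdot \operatorname{poly}(|\mathcal{G}'|)\right) \;\leq\; \mathcal{O}\!\left(2^{\sqrt{|S|}}\cdot \operatorname{poly}(|\rmdp|)\right),
\]
since \(|\mathcal{G}'|\) is polynomial in \(|\mathcal{G}_\rmdp|\), which in turn is linear in \(|\rmdp|\) by Theorem~\ref{thm:complexity}.

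Finally, I would unwind the two reductions to lift the resulting SSG policy back through \(\Phi_p^{-1}\) to an optimal pure positional policy of \(\rmdp\), and its value to \(\val^{*}(\rmdp)\). There is no genuine obstacle here; the corollary is essentially a bookkeeping argument. The only point that requires care, and which I would emphasize in the write-up, is the verification that the Max-side state count is preserved by both reductions, since it is precisely this conservation that allows \(|S|\) (rather than some larger polynomial in \(|\rmdp|\)) to appear inside the square root in Ludwig's bound.
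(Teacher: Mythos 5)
Your proposal is correct and follows essentially the same route as the paper: reduce the polytopic RMDP to a TBSG (Theorems~\ref{thm:soundness} and~\ref{thm:complexity}), reduce the TBSG to an SSG via the polynomial-time reduction of Andersson and Miltersen, and run Ludwig's randomized algorithm, with the key observation---identical to the paper's---that neither reduction adds Max vertices, so \(|S_{\max}|\) stays equal to \(|S|\) and the square root in Ludwig's bound is over \(|S|\) rather than a larger quantity. Your explicit remark about lifting the optimal policy back through the bijections is a harmless elaboration of what the paper leaves implicit.
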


\begin{remark}[Discounted polytopic RMDPs]
    In Appendix~\ref{app:discounted}, we show the above reduction can be modified to also work for discounted-sum objectives. Hence Corollaries~\ref{cor:rmdp-determinacy}--\ref{cor:subexp} hold for discounted-sum objectives as well.
\end{remark}

In \cite{Chatterjee23} authors provide an algorithm for solving TBSGs parameterized by the treewidth of the underlying graph. The algorithm has runtime $\mathcal{O}((t n^2 \log nW)^{t \log n})$ where $n$ is the number of nodes, $t$ is the treewidth of the graph and $W$ is the maximum absolute value of rewards in the game. Let $\rmdp$ be an RMDP with $m$ actions and whose underlying graph has treewidth $t$. In Appendix~\ref{app:treewidth}, we show that $\mathcal{G}_\rmdp$ has treewidth at most $(t+1)(m+1)-1$, which implies the following result.

\begin{cor}
    Given an RMDP $\rmdp$ with constant number of actions and treewidth of the underlying graph, the long-run average objective can be solved in quasi-polynomial time. 
\end{cor}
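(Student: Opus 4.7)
The plan is to construct a tree decomposition of $\mathcal{G}_\rmdp$ directly from a tree decomposition of the underlying graph of $\rmdp$, verify its width, and then invoke the treewidth-parametrised algorithm of \cite{Chatterjee23} with the resulting parameters.

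First, fix a tree decomposition $(T, \{B_x\}_{x \in T})$ of the underlying graph of $\rmdp$ of width $t$, so every bag $B_x \subseteq S^\rmdp$ has $|B_x| \le t+1$. Recall that the node set of $\mathcal{G}_\rmdp$ is $S^\rmdp \cup (S^\rmdp \times A^\rmdp)$: the Max states are copies of states of $\rmdp$ and the Min states are state-action pairs. For each bag $B_x$ of the original decomposition, form a new bag
\[
B_x' \;=\; B_x \;\cup\; \{\,(s,a) : s \in B_x,\ a \in A^\rmdp\,\}
\]
and keep the same tree structure $T$. The size of each $B_x'$ is $|B_x|(m+1) \le (t+1)(m+1)$, giving width at most $(t+1)(m+1)-1$.

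Next I verify the three tree-decomposition axioms. Vertex coverage: every Max state $s$ lies in some original $B_x$, hence in $B_x'$; every Min state $(s,a)$ is placed in $B_x'$ whenever $s \in B_x$. Edge coverage splits into two cases. Max-to-Min edges of the form $s \to (s,a)$ have both endpoints in every $B_x'$ with $s \in B_x$. For a Min-to-Max edge $(s,a) \to s'$, by construction of $\mathcal{G}_\rmdp$ the vertex $v$ of the uncertainty polytope $\uncert^\rmdp(s,a)$ assigns positive probability to $s'$, so $(s,s')$ is an edge in the underlying graph of $\rmdp$; hence some original bag $B_x$ contains both $s$ and $s'$, and then $B_x'$ contains $(s,a)$ and $s'$. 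Connectedness: the bags of $T$ containing a given state $s$ form a subtree by hypothesis, and the bags of the new decomposition containing a Min state $(s,a)$ are exactly those containing $s$, so they form the same subtree; the Max-state case is immediate.

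Finally, the algorithm of \cite{Chatterjee23} runs in time $\mathcal{O}((t' n^2 \log nW)^{t' \log n})$ on a TBSG with $n$ nodes, treewidth $t'$, and maximum absolute reward $W$. Applied to $\mathcal{G}_\rmdp$, we have $n \le |S^\rmdp|(m+1)$, $t' \le (t+1)(m+1)-1$, and $W \le 2 \cdot \max_{s,a} |r^\rmdp(s,a)|$. When $m$ and $t$ are constants, both $n = \mathcal{O}(|S^\rmdp|)$ and $t' = \mathcal{O}(1)$, so the exponent $t' \log n = \mathcal{O}(\log |S^\rmdp|)$ and the overall runtime becomes $2^{\mathcal{O}(\log|S^\rmdp| \cdot \log(|S^\rmdp|\log W))}$, which is quasi-polynomial in the size of $\rmdp$. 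Combined with Theorem~\ref{thm:soundness} (which lets us solve $\rmdp$ by solving $\mathcal{G}_\rmdp$) this yields the claimed bound.

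The main obstacle is the edge-coverage check for Min-to-Max edges: the proof there hinges on the correct notion of ``underlying graph'' of a polytopic RMDP, namely that $(s,s')$ is an edge whenever \emph{some} vertex of $\uncert^\rmdp(s,a)$ (equivalently, some distribution in $\uncert^\rmdp(s,a)$) puts positive mass on $s'$. Under that definition the argument goes through uniformly; under a narrower definition one would need to re-examine which transitions are introduced by passing to the polytope vertices, so the appendix version should make this convention explicit before constructing $B_x'$.
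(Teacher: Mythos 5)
Your construction is exactly the one in the paper's Appendix~D: the same bags $B_x' = B_x \cup \{(s,a) : s \in B_x,\ a \in A^\rmdp\}$ over the same tree, the same verification of the three tree-decomposition axioms, the same width bound $(t+1)(m+1)-1$, and the same invocation of the algorithm of \cite{Chatterjee23}. The proposal is correct and follows essentially the same route; your explicit remark about which convention of ``underlying graph'' makes the Min-to-Max edge-coverage step go through is a useful clarification the paper leaves implicit.
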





\section{Algorithm for Long-run Average RMDPs} \label{sec:algo}

In this section, we present our algorithm \emph{Robust Polytopic Policy Iteration (RPPI)} for computing long-run average values and optimal pure positional policies in polytopic RMDPs. The pseudocode of our algorithm is shown in Algorithm~\ref{alg:RPPI}. Our algorithm can be viewed as a policy iteration algorithm for solving long-run average reward polytopic RMDPs. Policy iteration was hitherto {\em not known to be sound} for solving long-run average reward RMDPs, thus the design of a policy iteration algorithm is another contribution of our work.



\smallskip\noindent{\bf Outline.} RPPI uses the reduction in Section~\ref{sec:red} to reduce our problem to solving long-run average TBSGs, for which policy iteration is known to be sound and highly efficient~\cite{HansenMZ13}. Our RPPI is thus motivated by efficient implementations of policy iteration for TBSGs, which utilize Blackwell optimality and use policy iteration to solve a number of discounted-sum TBSGs while increasing the discount factor until a discount factor for which discounted-sum and long-run average policies are equal is achieved.

\smallskip\noindent{\bf Algorithm details.} 
Algorithm~\ref{alg:RPPI} takes as input a polytopic RMDP $\rmdp$ (line~1). It then constructs a TBSG $\mathcal{G}_\rmdp$ as in Section~\ref{sec:red} to reduce our problem to solving a long-run average TBSG (line~2), initializes a discount factor $\gamma=\frac{1}{2}$ (line~3) and performs policy iteration for long-run average TBSGs (lines 4-10). For each value of the discount factor $\gamma$, Algorithm~\ref{alg:RPPI} first solves the discounted-sum TBSG and computes a pair of optimal policies by using the discounted-sum TBSG policy iteration algorithm of~\cite{HansenMZ13}, which runs in polynomial time, as an off-the-shelf subprocedure (line~5). It then uses Policy Profile Evaluation (PPE, see details below) to check if this policy profile is optimal for the $\limavg$ objective as well (line~6). If the policy profile is found to be optimal, then the algorithm breaks the loop (lines~7-8) 
and returns the agent policy together with its $\limavg$ value (line~12-13). Otherwise, the discount factor $\gamma$ is increased (line~10) and Algorithm~\ref{alg:RPPI} proceeds to solving the next discounted-sum TBSG (line~4-10). By Blackwell optimality for TBSGs, Algorithm~\ref{alg:RPPI} is guaranteed to eventually terminate and find a pair of optimal pure positional policies for each player in the TBSG $\mathcal{G}_\rmdp$. By the one-to-one correspondence between pure positional policies in the RMDP $\rmdp$ and the TBSG $\mathcal{G}_\rmdp$ established in the proof of Theorem~\ref{thm:soundness} and in Corollary~\ref{cor:rmdp-determinacy}, the agent player policy then gives rise to an optimal pure positional agent policy in the RMDP $\rmdp$, as desired.

\smallskip\noindent{\bf Policy profile evaluation.} We now describe the PPE subprocedure used for policy profile evaluation in line~6 of Algorithm~\ref{alg:RPPI}. 
PPE takes as input the TBSG $\mathcal{G}_\rmdp$ and policies $\sigma$ and $\pi$ of Max and Min players. In order to check whether $\sigma$ and $\pi$ are optimal for the long-run average objective, by pure positional determinacy of TBSGs (Theorem~\ref{thm:determinacy}) it suffices to checks if they provide optimal responses to each other.

To check if $\sigma$ and $\pi$ provide optimal responses to each other, PPE first considers an MDP obtained by fixing policy $\pi$ of the Min player and solves the long-run average MDP to compute the optimal long-run average payoff of Max player against $\pi$. Similarly, PPE then considers the MDP obtained by fixing policy $\sigma$ of the Max player and solves the long-run average MDP to compute the optimal long-run average payoff of Min player against $\sigma$. In both cases, the long-run average MDP can be solved via linear programming or policy iteration \cite{Puterman94}. If two MDPs have equal long-run average payoffs, then we conclude that two policies are optimal responses to each other, otherwise they are not.


The following theorem establishes the correctness of our RPPI algorithm (and the PPE subprocedure). Its proof follows from the correctness of our reduction (Theorem~\ref{thm:soundness}), Blackwell optimality for TBSGs~\cite{AndersonMiltersen:Stoch-Games-complexity}, and the correctness of all involved subprocedures for which we use off-the-shelf approaches.

\begin{theorem}
    Given a polytopic RMDP $\rmdp$, Algorithm~\ref{alg:RPPI} returns the optimal long-run average value $\val^*(\rmdp)$ and an optimal policy for the agent.
\end{theorem}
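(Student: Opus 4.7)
The plan is to decompose the correctness argument into three claims, each building on one of the ingredients named in the paper. First I would invoke Theorem~\ref{thm:soundness} together with the bijection $\Phi_p : \Sigma^\rmdp_p \to \Sigma^{\mathcal{G}_\rmdp}_p$ constructed in its proof: it suffices to show that the algorithm computes $\val^*(\mathcal{G}_\rmdp)$ and a pure positional optimal policy for Max in $\mathcal{G}_\rmdp$, since this policy pulls back to an optimal pure positional agent policy in $\rmdp$ with the same long-run average value by Corollary~\ref{cor:rmdp-determinacy}.

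Next I would argue that the main loop terminates with an optimal long-run average policy pair in $\mathcal{G}_\rmdp$. By Blackwell optimality for TBSGs there exists a threshold $\gamma^\ast\in[0,1)$ such that any pair of pure positional policies which are optimal for $\disc_\gamma$ for some $\gamma \in [\gamma^\ast,1)$ is also optimal for $\limavg$. Since Algorithm~\ref{alg:RPPI} drives $\gamma$ towards $1$ (e.g.\ by the standard bisection-like update on line~10), after finitely many iterations the current discount factor satisfies $\gamma \geq \gamma^\ast$. At that point the policy-iteration subroutine of Hansen--Miltersen--Zwick returns, in polynomial time, a pair $(\sigma,\pi)$ of pure positional policies that are optimal in the $\gamma$-discounted TBSG, hence optimal for $\limavg$ by Blackwell optimality.

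The third claim is that the PPE subprocedure on line~6 correctly detects $\limavg$-optimality. Here I would use pure positional determinacy (Theorem~\ref{thm:determinacy}): a pair of pure positional policies $(\sigma,\pi)$ is $\limavg$-optimal in $\mathcal{G}_\rmdp$ if and only if each is a best response to the other. Fixing $\pi$ in $\mathcal{G}_\rmdp$ yields an MDP whose optimal long-run average value equals $\sup_{\sigma'} \val(\mathcal{G}_\rmdp,\sigma',\pi)$, and symmetrically for $\sigma$; both can be computed exactly by linear programming~\cite{Puterman94}. The two values coincide precisely when $(\sigma,\pi)$ form a saddle point, which by determinacy is equivalent to optimality. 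Thus PPE returns true exactly on optimal profiles, so the exit on line~8 is sound, and by the second claim it is eventually reached; finally the returned value on line~13 is the common $\limavg$-value, which equals $\val^*(\mathcal{G}_\rmdp) = \val^*(\rmdp)$ by Theorem~\ref{thm:soundness}.

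The only subtle point I expect is ruling out premature termination and guaranteeing the loop is actually entered with $\gamma \geq \gamma^\ast$. Premature termination is handled by the ``if and only if'' in the PPE analysis: a false positive cannot occur, because whenever the two MDP values agree the pair is a saddle point and hence $\limavg$-optimal. The second concern is that the update rule for $\gamma$ really approaches $1$; this is a routine check on line~10 of the pseudocode and combined with Blackwell optimality gives termination in a finite (indeed, polynomial-in-the-bit-length) number of outer iterations, as analysed by~\cite{AndersonMiltersen:Stoch-Games-complexity,HansenMZ13}.
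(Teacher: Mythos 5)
Your proposal is correct and follows essentially the same route as the paper, which proves the theorem by combining exactly the three ingredients you identify: the reduction correctness of Theorem~\ref{thm:soundness} (with the pure positional policy bijection), Blackwell optimality for TBSGs to guarantee that the loop eventually reaches a discount factor beyond the Blackwell threshold, and the soundness of PPE via pure positional determinacy and the saddle-point characterization. Your write-up is in fact more detailed than the paper's one-sentence argument, and the best-response/saddle-point justification of PPE correctly rules out premature termination.
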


\begin{algorithm}[t]
\caption{Robust Polytopic Policy Iteration (RPPI)} 
\label{alg:RPPI}
\begin{algorithmic}[1]
\STATE \textbf{Input} polytopic RMDP $\rmdp$
\STATE $\mathcal{G}_\rmdp$ $\leftarrow$ TBSG by reduction from $\rmdp$ as in Section~\ref{sec:red}
\STATE $\gamma$ $\leftarrow$ $\frac{1}{2}$ \COMMENT{initial discount factor}
\WHILE{True}
\STATE $\sigma$, $\pi$ $\leftarrow$ optimal discounted-sum policies in $\mathcal{G}_\rmdp$ with discount factor $\gamma$, by method of~\cite{HansenMZ13}
\STATE $is\_optimal, rewards$ $\leftarrow$ PPE$(\mathcal{G}_\rmdp, \sigma, \pi)$
\IF{$is\_optimal$}
\STATE break
\ENDIF
\STATE $\gamma$ $\leftarrow$ $\frac{1+\gamma}{2}$
\ENDWHILE
\STATE $\val^*(\rmdp) = rewards[s_0]$
\STATE \textbf{Return} $\val^*(\rmdp), \sigma$
\end{algorithmic}
\end{algorithm}



\section{Experimental Results}

We implement RPPI (Algorithm~\ref{alg:RPPI}) and compare it against two state of the art value iteration-based methods for solving long-run average reward RMDPs with uncertainty sets not being intervals or $L^1$-balls, towards demonstrating the significant computational runtime gains provided by a policy iteration-based algorithm. Furthermore, we demonstrate the applicability of our method to non-unichain polytopic RMDPs to which existing algorithms are not applicable. Our implementation is publicly available at \url{https://github.com/mehrdad76/RMDP-LRA}.

\smallskip\noindent{\bf Implementation details.} We implement the policy iteration for discounted-sum TBSGs of~\cite{HansenMZ13} to compute optimal strategies in line 5 in Algorithm \ref{alg:RPPI}. Furthermore, we use the probabilistic model checker Storm~\cite{STORM} as an off-the-shelf tool to compute values in long-run average reward MDPs in the Policy Profile Evaluation subprocedure. Due to numerical precision issues in Python and Storm, in PPE we replace the equality by $|max\_rewards-min\_rewards|<10^{-5}$. All experiments were run in Python 3.9 on a Ubuntu 22.04 machine with an octa-core 2.40 GHz Intel Core i5 CPU, 16 GB RAM. 

\smallskip\noindent{\bf Baselines.} We compare the runtime of our RPPI against two value iteration-based algorithms: Robust Value Iteration (RVI) and Robust Relative Value Iteration (RRVI), both proposed in~\cite{WangVAPZ23}. Similarly to our RPPI, RVI uses Blackwell optimality to compute the long-run average value and optimal policy by solving discounted-sum reward RMDPs and gradually increasing the discount factor. On the other hand, RRVI is based on a value iteration procedure that directly solves the long-run average reward RMDP. 

Both RVI and RRVI impose {\em structural restrictions} on the RMDP. RVI requires the RMDP to be a unichain. RRVI requires the RMDP to be unichain and aperiodic. In contrast, our RPPI does not impose any of these assumptions.

Moreover, while RVI and RRVI are guaranteed to converge in the limit, they do not provide a stopping criteria for practical implementation. Hence, in our comparison, we run them both until the absolute difference between their computed values and our computed value is at most $10^{-3}$. 

\smallskip\noindent{\bf Benchmarks.} We consider the following benchmarks:

\smallskip\textbf{1. Contamination Model.} Contamination Model is a conservative model of an RMDP defined over a given MDP where, at each step, the probability of taking the presrcibed MDP transition function is $R \in [0,1$] whereas with probability $1-R$ the environment gets to choose any probability distribution over the states. In our evaluation, we consider a Contamination Model taken from~\cite{WangVAPZ23}. For a given natural number $n$, the base MDP contains $n$ states and $n+10$ actions. The transition function and rewards for each state-action pair are then randomly generated (see~\cite{WangVAPZ23} for details). The uncertainty set $\uncert_{(s,a)}$ for state-action pair $(s,a)$ is $\uncert_{(s,a)} = \{ (1-R)\tran_{(s,a)}+R\cdot p ~|~ p \in \dist(S) \}$. The resulting RMDP is polytopic, unichain and aperiodic, hence both baselines are applicable to it.

\smallskip\textbf{2. Frozen Lake.} This benchmark modifies the Frozen Lake environment in the OpenAI Gym \cite{OpenAIGym} to turn it into an RMDP. Consider an $n \times n$ grid with some cells being holes. An agent starts at the top left corner and chooses an action from $\{\textit{left},\textit{right},\textit{up},\textit{down}\}$ to move to the next cell in that direction. However, the grid is slippery and the agent might move in a direction perpendicular to the chosen direction. If the agent falls into a hole, they are stuck and remain there with probability~$1$. The reward increases as the agent gets closer to the bottom right corner of the grid, so the agent should get to and stay around the bottom right corner. We turn this model into a polytopic RMDP by allowing the adversarial environment to perturb the transition probabilities by increasing the probability of moving to one adjacent cell by at most $d = 0.2$ while decreasing the remaining probabilities uniformly. Note that this RMDP is aperiodic and multichain, hence neither RVI nor RRVI baselines are applicable. We use this environment to evaluate our RPPI on multichain polytopic RMDPs to which existing methods are not applicable. In order to allow for a comparison, we also consider a unichain variant of the Frozen Lake environment where the holes are treated as ''walls'' and cannot be visited. To distinguish between the two variants, we refer to them as {\em Unichain} and {\em Multichain} Frozen Lake models, respectively.


\smallskip\noindent{\bf Results on Contamination Model.} We compared our method against both baselines and our results are shown in Figure~\ref{fig:contamination}. As indicated by the results, our RPPI is considerably faster than RVI. Given the similar structure of RPPI and RVI with the main difference being the use policy instead of value iteration, our results confirm the significant gains in computational runtime provided by our reduction to TBSGs which allowed the design of a policy iteration algorithm. 
On the other hand, we observe that the RRVI baseline is faster than both our method and the RVI baseline. The explanation for this is that RRVI is an algorithm specifically tailored to unichain aperiodic RMDPs, whereas our RPPI imposes no structural restrictions on the RMDP.

\begin{figure}[t]
	\centering
        \vspace{-1em}
	\includegraphics[scale=0.28]{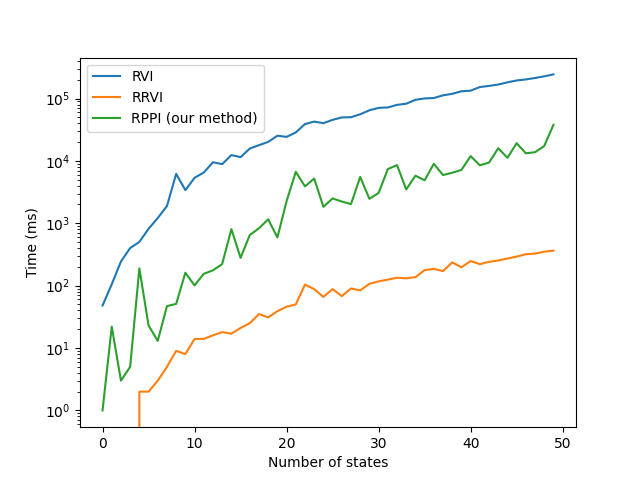}
	\caption{Runtime comparison on the Contamination Model. 
    }
	\label{fig:contamination}
\end{figure}

\smallskip\noindent{\bf Results on Frozen Lake.} Table~\ref{table:frozen} shows our results on the Frozen Lake model. As mentioned before, due to RMDP structural restrictions imposed by the baselines, RVI is only applicable to the Unichain model whereas RRVI is applicable to neither of the models. Our RPPI imposes no structural restrictions on the RMDP and is thus applicable to both models.
As we can see from Table~\ref{table:frozen} our policy iteration-based RPPI is significantly faster than RVI, with an average runtime of 7.8 seconds. In contrast, the RVI method takes more than 2000 seconds on average, and times out on the $10 \times 10$ model. Moreover, our method successfully solves all the Multichain model instances in less than 4 minutes each. 

\begin{table}[t]
\caption{Runtime comparison on the Frozen Lake models.}
\label{table:frozen}
\begin{tabular}{|c|c|c|c|}
	\hline
	& \multicolumn{2}{c|}{Unichain} & Multichain \\
	\hline
	n & RPPI Time (s) & VI Time (s) & RPPI Time (s) \\
	\hline
	\hline
	2 & {\bf 0.0} & 0.15 & {\bf 0.0} \\
	\hline
	3 & {\bf 0.0} & 8.3 & {\bf 2.6} \\
	\hline
	4 & {\bf 0.0} & 156.8 & {\bf 0.6} \\
	\hline
	5 & {\bf 0.1} & 106.6 & {\bf 4.0} \\
	\hline
	6 & {\bf 6.1} & 1232.6 & {\bf 53.7} \\
	\hline
	7 & {\bf 17.4} & 3347.8 & {\bf 119.0} \\
	\hline
	8 & {\bf 3.7} & 7383.6 & {\bf 104.8} \\
	\hline
	9 & {\bf 19.2} & 6496.6 & {\bf 93.0} \\
	\hline
	10 & {\bf 23.7}  & Timeout & {\bf 231.4} \\
	\hline
	\hline 
	Average & {\bf 7.8} & 2341.5 & {\bf 67.6}\\
	STDEV	& 9.5 & 3061.8 & 78.2\\
	\hline
\end{tabular}
\end{table}

\smallskip\noindent{\bf Concluding remarks.} We considered the problem of solving long-run average reward RMDPs with polytopic uncertainty sets and proposed a new perspective on this problem by establishing a linear-time reduction to long-run average reward turn-based stochastic games. This reduction allowed us to leverage results from the stochastic games literature and obtain a number of results on computational complexity and efficient algorithms that were hitherto not known to hold for long-run average polytopic RMDPs. 
Our work leaves several interesting venues for future work. 
A natural question to consider is whether and how RMDPs with non-rectangular uncertainty sets could be related to stochastic games. Another direction would be to consider the connection between RMDPs with non-polytopic uncertainty sets and stochastic games, e.g.~when uncertainty sets are specified in terms of KL-divergence from some nominal distribution~\cite{WangVAPZ23}. The extension to non-polytopic uncertainty sets via either (a) approximating non-polytopic with polytopic uncertainty sets and applying our method, or (b) designing new approaches, are interesting
directions of future work.

\section*{Acknowledgements}

This work was supported in part by the ERC-2020-CoG 863818 (FoRM-SMArt) and the Czech Science Foundation grant no. GA23-06963S

\bibliographystyle{named}
\bibliography{bibliography}

\newpage
\appendix

\begin{center}
\Large
Appendix
\end{center}

\section{Proof of Theorem~2}\label{app:soundnessproof}

\begin{theorem*}[Correctness]
    Consider a polytopic RMDP \( \rmdp = (S^\rmdp, A^\rmdp, \uncert^\rmdp, \rew^\rmdp, s_0^\rmdp) \) and define a TBSG \( \mathcal{G}^\rmdp =  (S^{\mathcal{G}_\rmdp},A^{\mathcal{G}_\rmdp},\delta^{\mathcal{G}_\rmdp},\rew^{\mathcal{G}_\rmdp}, s_0^{\mathcal{G}_\rmdp}) \) as above. Then, for long-run average objective we have
    \[ \val^* (\mathcal{\rmdp}) = \val^* (\mathcal{G}^\rmdp). \]
\end{theorem*}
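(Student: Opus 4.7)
The plan is to follow the route outlined in the sketch: define value-preserving mappings $\Phi$ and $\Psi$ from RMDP policies to TBSG policies, refine them to bijections on pure positional policies, and then invoke pure positional determinacy for TBSGs (Theorem~\ref{thm:determinacy}) to conclude. The map $\Phi$ is the natural one: a TBSG trajectory alternates between Max states (copies of $S^\rmdp$) and Min states (copies of $S^\rmdp \times A^\rmdp$), so erasing the Min entries from a Max-ending TBSG history $h^{\mathcal{G}}$ yields a valid RMDP history $h$, and we set $\Phi(\sigma)(h^{\mathcal{G}}) := \sigma(h)$. The map $\Psi$ is more delicate, because the RMDP environment policy $\pi$ returns a distribution $\mu \in \uncert^\rmdp(s,a)$ that may lie in the interior of the polytope. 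Since $\mu$ belongs to the convex hull of $V^\rmdp_{s,a}$, we fix a canonical (measurable) choice of coefficients $\lambda_v \geq 0$ summing to $1$ with $\mu = \sum_v \lambda_v \cdot v$, and let $\Psi(\pi)$ at the corresponding Min-ending TBSG history play vertex $v$ with probability $\lambda_v$.

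Next I would establish value preservation: $\val(\rmdp, \sigma, \pi) = \val(\mathcal{G}_\rmdp, \Phi(\sigma), \Psi(\pi))$ for every $(\sigma, \pi)$. By induction on trajectory length, the projection map from TBSG to RMDP trajectories pushes the measure $\probm^{\Phi(\sigma), \Psi(\pi)}_{\mathcal{G}_\rmdp}$ forward to $\probm^{\sigma, \pi}_\rmdp$; the inductive step uses that the expected successor distribution under $\Psi(\pi)$'s randomized action at Min state $(s,a)$ is $\sum_v \lambda_v v = \mu$, matching the RMDP dynamics. Because the TBSG reward is $2\rew^\rmdp$ at a Max step and $0$ at the following Min step, the partial average of the first $2N+2$ TBSG rewards equals $\frac{1}{N+1}\sum_{i=0}^N \rew^\rmdp(s_i,a_i)$, and a routine calculation shows the intermediate (odd-index) averages differ from this by $o(1)$; hence the TBSG $\liminf$ sample-path-wise agrees with the $\limavg$ of the projected RMDP trajectory. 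Taking expectations gives value preservation, which already yields $\val^*(\mathcal{G}_\rmdp) \leq \val^*(\rmdp)$.

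For the bijection on pure positional policies, a pure positional TBSG Max policy is literally a map $S^\rmdp \to A^\rmdp$, i.e., a pure positional RMDP agent policy, giving $\Phi_p$. A pure positional TBSG Min policy assigns to each $(s,a)$ a single vertex $v \in V^\rmdp_{s,a}$, which via $\Psi_p^{-1}$ corresponds to the vertex-valued pure positional RMDP environment policy $(s,a) \mapsto v$. The subtlety is that $\Psi_p$ ranges only over vertex-valued pure positional RMDP policies. To handle this I would invoke linearity: for any fixed pure positional agent policy $\sigma^\rmdp$, the resulting Markov reward process has a long-run average value that is a piecewise-linear function of the environment's chosen distribution at each $(s,a)$, hence minimized at a vertex of $\uncert^\rmdp(s,a)$. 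Thus vertex-valued pure positional environment policies already realize the worst-case value against $\sigma^\rmdp$, and combining this with Theorem~\ref{thm:determinacy} for $\mathcal{G}_\rmdp$ and value preservation yields $\val^*(\rmdp) \leq \val^*(\mathcal{G}_\rmdp)$, closing the argument.

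The main obstacles I expect are (i) arranging the decomposition of $\mu$ into vertex coefficients so that $\Psi(\pi)$ is a well-defined measurable policy uniformly in the history; (ii) the pushforward/coupling argument on trajectory measures, which is routine but notationally heavy because of the alternating structure; and (iii) justifying that restricting the RMDP environment to vertex-valued pure positional policies does not lose optimality, which is where the polytopic assumption enters essentially via linearity of the long-run average in the transition distribution at a single $(s,a)$.
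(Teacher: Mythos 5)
Your proposal is correct and follows essentially the same route as the paper's proof: the history-projection maps $\Phi$ and $\Psi$, value preservation via a pushforward of trajectory measures together with the doubled-reward/zero-reward bookkeeping, the bijection on pure positional policies, and the conclusion via pure positional determinacy (Theorem~\ref{thm:determinacy}). If anything, you are more explicit than the paper on one point the paper glosses over, namely that defining $\Psi(\pi)$ requires fixing a (non-unique, measurable) convex decomposition of each $\mu \in \uncert^\rmdp(s,a)$ into vertices of the polytope; your added piecewise-linearity argument for vertex-valued environment policies is not needed given determinacy, but it is not wrong.
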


\begin{proof}
    Let $\Sigma^{\rmdp}$ and $\Pi^{\rmdp}$ denote the sets of all policies of the agent and the environment in the RMDP, and let $\Sigma^{\mathcal{G}_\rmdp}$ and $\Pi^{\mathcal{G}_\rmdp}$ denote the sets of all policies of the agent player and the environment player in the TBSG. We use $\Sigma^{\rmdp}_p$, $\Pi^{\rmdp}_p$, $\Sigma^{\mathcal{G}_\rmdp}_p$ and $\Pi^{\mathcal{G}_\rmdp}_p$ to denote the subsets of pure positional policies in these sets. Our proof proceeds in four steps. First, we define the mappings
    \begin{equation*}
        \Phi: \Sigma^{\rmdp} \rightarrow \Sigma^{\mathcal{G}_\rmdp}, \hspace{1em}
        \Psi: \Pi^{\rmdp} \rightarrow \Pi^{\mathcal{G}_\rmdp}.
    \end{equation*}
    Second, we show that these mappings preserve the values of policy pairs. Third, we show that these mappings give rise to one-to-one correspondences (i.e.~bijections) between pure positional policies in RMDPs and TBSGs
    \begin{equation*}
        \Phi_p: \Sigma^{\rmdp}_p \rightarrow \Sigma^{\mathcal{G}_\rmdp}_p, \hspace{1em}
        \Psi_p: \Pi^{\rmdp}_p \rightarrow \Pi^{\mathcal{G}_\rmdp}_p.
    \end{equation*}
    Fourth, we use all the above ingredients to conclude the theorem claim and show that $\val^* (\mathcal{\rmdp}) = \val^* (\mathcal{G}^\rmdp)$.

    \paragraph{Step~1: Definition of $\Phi$} Consider an RMDP agent policy
    \[ \sigma: (S^\rmdp\times A^\rmdp)^{\ast} \times S^\rmdp \rightarrow \dist(A^\rmdp). \]
    We define a TBSG agent player policy
    \[ \Phi(\sigma): (S_{\max}^{\mathcal{G}_\rmdp} \times A_{\max}^{\mathcal{G}_\rmdp} \cup S_{\min}^{\mathcal{G}_\rmdp} \times A_{\min}^{\mathcal{G}_\rmdp})^{\ast} \times S_{\max}^{\mathcal{G}_\rmdp} \rightarrow \Delta(A_{\max}^{\mathcal{G}_\rmdp}). \]
    as follows. In order to define $\Phi(\sigma)$, we need to specify $\Phi(\sigma)(h)$ for every Max-history $h$ in the TBSG $\mathcal{G}_\rmdp$. Let $h$ be a Max-history in $\mathcal{G}_\rmdp$. Since the adversary and the environment players alternate in turns in the TBSG $\mathcal{G}_\rmdp$, the history $h$ must be of form $h  = s_0,a_0,s_1,a_1,\dots,s_{2k-2},a_{2k-2},s_{2k-1},a_{2k-1},s_{2k}$ for some $k \in \mathbb{N}_0$, where even-indexed states and actions belong to the adversary player and odd-indexed states and actions belong to the environment player. Let $h' = s_0,a_0,\dots,s_{2k-2},a_{2k-2},s_{2k}$ be the subsequence consisting only of states and actions belonging to the adversary player in $\mathcal{G}_\rmdp$. Then $h'$ defines a history in the RMDP, and we define
    \[ \Phi(\sigma)(h) := \sigma(h') \]
    where $\sigma(h')$ is interpreted as a probability distribution over $A_{\max}^{\mathcal{G}_\rmdp}$ which recall is defined as a copy of $A^\rmdp$.

    \paragraph{Step~1: Definition of $\Psi$} Consider an RMDP environment policy
    \[ \pi: (S^\rmdp\times A^\rmdp)^{\ast} \times S^\rmdp \times A^\rmdp \rightarrow \uncert^\rmdp. \]
    We define a TBSG environment player policy
    \[ \Psi(\pi): (S_{\max}^{\mathcal{G}_\rmdp} \times A_{\max}^{\mathcal{G}_\rmdp} \cup S_{\min}^{\mathcal{G}_\rmdp} \times A_{\min}^{\mathcal{G}_\rmdp})^{\ast} \times S_{\min}^{\mathcal{G}_\rmdp} \rightarrow \Delta(A_{\min}^{\mathcal{G}_\rmdp}). \]
    as follows. In order to define $\Psi(\pi)$, we need to specify $\Psi(\pi)(h)$ for every Min-history $h$ in the TBSG $\mathcal{G}_\rmdp$. Let $h$ be a Min-history in $\mathcal{G}_\rmdp$. Since the adversary and the environment players alternate in turns in the TBSG $\mathcal{G}_\rmdp$, the history $h$ must be of form $h  = s_0,a_0,s_1,a_1,\dots,s_{2k-2},a_{2k-2},s_{2k-1},a_{2k-1},s_{2k},a_{2k},s_{2k+1}$ for some $k \in \mathbb{N}_0$, where even-indexed states and actions belong to the adversary player and odd-indexed states and actions belong to the environment player. Let $h' = s_0,a_0,\dots,s_{2k-2},a_{2k-2},s_{2k},a_{2k}$ be the subsequence consisting only of states and actions belonging to the adversary player in $\mathcal{G}_\rmdp$. Then $h'$ defines an environment history in the RMDP, and we define
    \[ \Psi(\pi)(h) := \pi(h') \]
    where $\pi(h')$ is interpreted as a probability distribution over the vertices of the uncertainty polytope $\uncert^\rmdp(s_{2k},a_{2k})$.
    
    \paragraph{Step~2: Preservation of policy pair values} We now show that $\Phi$ and $\Psi$ preserve values of policy pairs, i.e.~that
    \begin{equation}\label{eq:goal}
        \val(\rmdp,\sigma,\pi) = \val(\mathcal{G}_\rmdp,\Phi(\sigma),\Psi(\pi))
    \end{equation}
    holds for each $\sigma \in \Sigma^{\rmdp}$ and $\pi \in \Pi^{\rmdp}$ under long-run average objective. To prove this, we first recall the definitions of values in RMDPs and TBSGs. We have
    \begin{equation*}
    \begin{split}
    &\val(\rmdp,\sigma,\pi) \\
    &= \expv^{\sigma,\pi}_{\rmdp}\Big[\limavg(S_0^\rmdp,A_0^\rmdp,S_1^\rmdp,A_1^\rmdp,\dots)\Big] \\
    &= \expv^{\sigma,\pi}_{\rmdp}\Big[\liminf_{N\rightarrow \infty} \frac{1}{N+1}\cdot \sum_{i=0}^{N}\rew(S_i^\rmdp,A_i^\rmdp)\Big],
    \end{split}
    \end{equation*}
    where we use $S_i^\rmdp$ and $A_i^\rmdp$ to denote random variables defined as the $i$-th state and action in a random trajectory induced by the probability distribution over the space of all trajectories in the RMDP defined by policies $\sigma$ and $\pi$. Similarly, by our definition of the reward function in the TBSG $\mathcal{G}_\rmdp$ which only incurs rewards in state-action pairs owned by the agent player, we have
    \begin{equation*}
    \begin{split}
    &\val(\mathcal{G}_\rmdp,\Phi(\sigma),\Psi(\pi)) \\
    &= \expv^{\Phi(\sigma),\Psi(\pi)}_{\mathcal{G}_\rmdp}\Big[\limavg(S_0^{\mathcal{G}_\rmdp},A_0^{\mathcal{G}_\rmdp},(S_0^{\mathcal{G}_\rmdp},A_0^{\mathcal{G}_\rmdp}),V_0^{\mathcal{G}_\rmdp},\dots)\Big] \\
    &= \expv^{\Phi(\sigma),\Psi(\pi)}_{\mathcal{G}_\rmdp}\Big[\liminf_{N\rightarrow \infty} \frac{1}{N+1}\cdot \sum_{i=0}^{N}\rew(S_i^{\mathcal{G}_\rmdp},A_i^{\mathcal{G}_\rmdp})\Big]
    \end{split}
    \end{equation*}
    where we use $S_i^{\mathcal{G}_\rmdp}$, $A_i^{\mathcal{G}_\rmdp}$, $V_i^{\mathcal{G}_\rmdp}$ to denote random variables defined as the $i$-th state owned by the agent player, action owned by the agent player and uncertainty polytope vertex owned by the environment player in a random trajectory induced by the probability distribution over the space of all trajectories in the TBSG defined by policies $\Phi(\sigma)$ and $\Psi(\pi)$.
    
    Hence, in order to prove eq.~\eqref{eq:goal}, by Dominated Convergence Theorem~\cite[Section~5.9]{Williams91} it suffices to show that for each $N\in\mathbb{N}$ we have
    \begin{equation}\label{eq:claim}
    \begin{split}
    &\expv^{\sigma,\pi}_{\rmdp}\Big[ \frac{\sum_{i=0}^{N}\rew(S_i^\rmdp,A_i^\rmdp)}{N+1}\Big] \\
    &= \expv^{\Phi(\sigma),\Psi(\pi)}_{\mathcal{G}_\rmdp}\Big[\frac{\sum_{i=0}^{N}\rew(S_i^{\mathcal{G}_\rmdp},A_i^{\mathcal{G}_\rmdp})}{N+1}\Big].
    \end{split}
    \end{equation}
    Eq.~\eqref{eq:claim} follows immediately by writing out the definition of the expected value and by proving that, for each $N\in\mathbb{N}$ and for each RMDP history $s_0,a_0,s_1,a_1,\dots,s_N,a_N$, we have
    \begin{equation*}
    \begin{split}
    &\probm^{\sigma,\pi}_{\rmdp}\Big[ \bigwedge_{i=0}^N S_i^\rmdp = s_i \land A_i^\rmdp = A_i \Big] \\
    &= \probm^{\Phi(\sigma),\Psi(\pi)}_{\mathcal{G}_\rmdp}\Big[\bigwedge_{i=0}^N S_i^{\mathcal{G}_\rmdp} = s_i \land A_i^{\mathcal{G}_\rmdp} = A_i\Big].
    \end{split}
    \end{equation*}
    But this can be proved straightforwardly by induction on $N$, by our construction of the TBSG $\mathcal{G}_\rmdp$ and by using Markov property in both RMDP and TBSG. Hence, eq.~\eqref{eq:goal} follows.
     
    \paragraph{Step~3: One-to-one correspondence between pure positional policies} Next, we observe that the mappings $\Phi$ and $\Psi$ become one-to-one correspondences when restricted to pure positional policies, i.e.~that
    \begin{equation*}
        \Phi_p: \Sigma^{\rmdp}_p \rightarrow \Sigma^{\mathcal{G}_\rmdp}_p, \hspace{1em}
        \Psi_p: \Pi^{\rmdp}_p \rightarrow \Pi^{\mathcal{G}_\rmdp}_p.
    \end{equation*}
    define bijections between pure positional policies in the RMDP $\rmdp$ and the TBSG $\mathcal{G}_\rmdp$. The facts that $\Phi_p$ and $\Psi_p$ are both injective and surjective follow immediately from our construction of maps $\Phi$ and $\Psi$ above.
    
    \paragraph{Step~4: Conclusion of theorem proof} We conclude the theorem claim as follows. For every pair $\sigma$ and $\pi$ of an adversary and an environment policy in the RMDP $\rmdp$, by Step~2 above we have that, under both long-run average and discounted-sum objectives,
    \[ \val(\mathcal{\rmdp},\sigma,\pi) = \val(\mathcal{G}_\rmdp,\Phi(\sigma),\Psi(\pi)). \]
    Hence, we have
    \begin{equation*}
    \begin{split}
        \val^* (\mathcal{\rmdp}) &= \sup_{\sigma \in \Sigma^{\rmdp}} \inf_{\pi \in \Pi^{\rmdp}} \val(\rmdp,\sigma,\pi) \\
        &= \sup_{\sigma \in \Sigma^{\rmdp}} \inf_{\pi \in \Pi^{\rmdp}} \val(\mathcal{G}_\rmdp,\Phi(\sigma),\Psi(\pi)) \\
        &\leq \sup_{\sigma \in \Sigma^{\rmdp}} \inf_{\pi' \in \Pi^{\mathcal{G}_\rmdp}_p} \val(\mathcal{G}_\rmdp,\Phi(\sigma),\pi') \\
        &\leq \sup_{\sigma' \in \Sigma^{\mathcal{G}_\rmdp}} \inf_{\pi' \in \Pi^{\mathcal{G}_\rmdp}_p} \val(\mathcal{G}_\rmdp,\Phi(\sigma),\pi') \\
        &= \sup_{\sigma' \in \Sigma^{\mathcal{G}_\rmdp}_p} \inf_{\pi' \in \Pi^{\mathcal{G}_\rmdp}_p} \val(\mathcal{G}_\rmdp,\Phi(\sigma),\pi') \\
        &= \val^* (\mathcal{G}_\rmdp),
    \end{split}
    \end{equation*}
    where in the second equality we used Step~2 above, in the third inequality we used the fact that pure positional policies are a subset of all policies and that $\Psi$ is a bijection when restricted to pure positional policies, in the fourth inequality we used that $\{\Phi(\sigma) \mid \sigma \in \Sigma^{\rmdp}\} \subseteq \Sigma^{\mathcal{G}_\rmdp}$ and in the last two equalities we used pure positional determinacy of TBSGs (Theorem~1 in the main paper).

    On the other hand, by pure positional determinacy of TBSGs, we also know that there exist pure positional policies $\sigma^\ast \in \Sigma^{\mathcal{G}_\rmdp}_p$ and $\pi^\ast \in \Pi^{\mathcal{G}_\rmdp}_p$ such that $\val^* (\mathcal{G}_\rmdp) = \val(\mathcal{G}_\rmdp,\sigma^\ast,\pi^\ast)$. Then, by Steps~2 and~3 above, we also have $\val^* (\mathcal{G}_\rmdp) = \val(\rmdp,\Phi^{-1}(\sigma^\ast),\Psi^{-1}(\pi^\ast))$. Hence, as above we showed that $\val^* (\mathcal{\rmdp}) \leq \val^* (\mathcal{G}^\rmdp)$ we conclude that $\Phi^{-1}(\sigma^\ast)$ and $\Psi^{-1}(\pi^\ast)$ achieve optimal payoffs in the RMDP $\rmdp$ and thus conclude that $\val^* (\mathcal{\rmdp}) = \val^* (\mathcal{G}^\rmdp)$.
\end{proof}

\section{Proof of Theorem~3}\label{app:complexityproof}

\begin{theorem*}[Complexity]
    Consider a polytopic RMDP \( \rmdp = (S^\rmdp, A^\rmdp, \uncert^\rmdp, \rew^\rmdp, s_0^\rmdp) \) and define a TBSG \( \mathcal{G}^\rmdp =  (S^{\mathcal{G}_\rmdp},A^{\mathcal{G}_\rmdp},\delta^{\mathcal{G}_\rmdp},\rew^{\mathcal{G}_\rmdp}, s_0^{\mathcal{G}_\rmdp}) \) as in Section~3 of the paper. Then, the size of $\mathcal{G}^\rmdp$ is linear in the size of $\rmdp$.
\end{theorem*}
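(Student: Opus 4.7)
The plan is to carry out a direct bookkeeping argument, walking through each component of the TBSG $\mathcal{G}_\rmdp$ in turn and bounding its encoding size by a constant multiple of the encoding size of $\rmdp$. Under the representation fixed just before the theorem, the encoding size $|\rmdp|$ includes: the lists $S^\rmdp$ and $A^\rmdp$; for each pair $(s,a)$, the explicit list of vertices $V^\rmdp_{s,a}$ of the polytope $\uncert^\rmdp(s,a)$, where each vertex is a vector in $\mathbb{Q}^{|S^\rmdp|}$; the list of rewards $\rew^\rmdp(s,a)$; and the initial state $s_0^\rmdp$. In particular, $|\rmdp|$ already dominates $|S^\rmdp|$, $|A^\rmdp|$, $|S^\rmdp|\cdot|A^\rmdp|$, and $\sum_{(s,a)}|V^\rmdp_{s,a}|$, as well as the total bit-length needed to write all vertex coordinates and all reward values.

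First I would count the states: by construction, $|S^{\mathcal{G}_\rmdp}_{\max}|=|S^\rmdp|$ and $|S^{\mathcal{G}_\rmdp}_{\min}|=|S^\rmdp|\cdot|A^\rmdp|$, so $|S^{\mathcal{G}_\rmdp}|=|S^\rmdp|(1+|A^\rmdp|)$, which is linear in $|\rmdp|$. Next, for the actions, $|A^{\mathcal{G}_\rmdp}_{\max}|=|A^\rmdp|$ and $|A^{\mathcal{G}_\rmdp}_{\min}|=\sum_{(s,a)}|V^\rmdp_{s,a}|$, which together are linear in $|\rmdp|$ plus the total vertex count (which is itself part of $|\rmdp|$). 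For the transition function, every Max state-action pair $(s,a)\in S^{\mathcal{G}_\rmdp}_{\max}\times A^{\mathcal{G}_\rmdp}_{\max}$ is assigned a Dirac distribution on a single successor, contributing $O(|S^\rmdp|\cdot|A^\rmdp|)$ bits in total, while every Min state-action pair $((s,a),v)\in S^{\mathcal{G}_\rmdp}_{\min}\times A^{\mathcal{G}_\rmdp}_{\min}$ is assigned exactly the distribution encoded by the vertex $v\in V^\rmdp_{s,a}$, whose representation is literally copied from the input; the aggregate size of these distributions equals the aggregate size of all polytope vertex descriptions in $\rmdp$. Finally, the reward function is a list of $|A^\rmdp|$ doubled rewards for each Max state (copied and doubled from $\rew^\rmdp$, with at most one extra bit per entry) together with zeros for all Min state-action pairs, which is again linear.

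Summing these contributions, every component of $\mathcal{G}_\rmdp$ is bounded by a constant multiple of the corresponding component of $\rmdp$ (plus the total vertex count), so $|\mathcal{G}_\rmdp|=O(|\rmdp|)$ and moreover the construction itself can be performed in linear time by scanning the input once. I do not expect any genuine obstacle here; the only subtle point is to make the input representation explicit before starting to count, so that all the arithmetic above is honest. In particular, one should note that the vertex lists $V^\rmdp_{s,a}$ are already part of the input and are not being computed from a halfspace description, which is precisely what lets the transition function of $\mathcal{G}_\rmdp$ be written out in linear time without any polytope-vertex enumeration overhead.
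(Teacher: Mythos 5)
Your proposal is correct and follows essentially the same component-by-component bookkeeping as the paper's proof in Appendix~\ref{app:complexityproof}: both fix the vertex-list representation of the input, then bound states, actions, transitions (Dirac for Max, copied vertex distributions for Min), rewards, and the initial state by the corresponding parts of $|\rmdp|$. Your explicit remark that the vertex lists are given in the input rather than computed from a halfspace description makes precise an assumption the paper relies on implicitly, but the argument is otherwise the same.
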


\begin{proof}
    We need to show that each element of the tuple \( \mathcal{G}^\rmdp =  (S^{\mathcal{G}_\rmdp},A^{\mathcal{G}_\rmdp},\delta^{\mathcal{G}_\rmdp},\rew^{\mathcal{G}_\rmdp}, s_0^{\mathcal{G}_\rmdp}) \) is of size linear in the size of the elements of the tuple \( \rmdp = (S^\rmdp, A^\rmdp, \uncert^\rmdp, \rew^\rmdp, s_0^\rmdp) \).

    Before we prove this claim, recall that we assume that each polytopic RMDP is represented as an ordered tuple \( \rmdp = (S^\rmdp, A^\rmdp, \uncert^\rmdp, \rew^\rmdp, s_0^\rmdp) \), where $S^\rmdp$ is a list of states, $A^\rmdp$ is a list of actions, $\uncert^\rmdp$ is a list of polytope vertices for each state-action pair, $\rew^\rmdp$ is a list of rewards for each state-action pair and $s_0^\rmdp$ is an element of the state list.

    We now prove the desired claim:
    \begin{compactitem}
        \item {\em States.} We have $S^{\mathcal{G}_\rmdp} = S^{\mathcal{G}_\rmdp}_{\max} \cup S^{\mathcal{G}_\rmdp}_{\min}$, where $S^{\mathcal{G}_\rmdp}_{\max} = S^{\rmdp}$ and $S^{\mathcal{G}_\rmdp}_{\min} = S^{\rmdp} \times A^{\rmdp}$. Hence,
        \[ |S^{\mathcal{G}_\rmdp}| \in \mathcal{O}(|S^{\rmdp}| + |S^{\rmdp}| \cdot |A^{\rmdp}|), \]
        which is linear in the size of the RMDP, since $\mathcal{O}(|S^{\rmdp}| \cdot |A^{\rmdp}|)$ is linear in the size of $\uncert^\rmdp$. This is because $\uncert^\rmdp$ stores polytope vertices for each state-action pair in the RMDP.
        
        \item {\em Actions.} We have $A^{\mathcal{G}_\rmdp} = A^{\mathcal{G}_\rmdp}_{\max} \cup A^{\mathcal{G}_\rmdp}_{\min}$, where $A^{\mathcal{G}_\rmdp}_{\max} = A^{\rmdp}$ and $A^{\mathcal{G}_\rmdp}_{\min} = \cup_{(s,a) \in S^{\rmdp} \times A^{\rmdp}} V^{\rmdp}_{s,a}$. Hence,
        \begin{equation*}
        \begin{split}
            |A^{\mathcal{G}_\rmdp}| &\in \mathcal{O}\Big(|A^{\rmdp}| + \sum_{(s,a) \in S^{\rmdp} \times A^{\rmdp}}|V^{\rmdp}_{s,a}|\Big) \\
            &= \mathcal{O}(|A^{\rmdp}| + |\uncert^{\rmdp}|),
        \end{split}
        \end{equation*}
        which is linear in the size of the RMDP.
        
        \item {\em Transition function.} Since $\delta^{\mathcal{G}_\rmdp}: S{^\mathcal{G}_\rmdp}_{\max} \times A{^\mathcal{G}_\rmdp}_{\max} \cup S{^\mathcal{G}_\rmdp}_{\min} \times A{^\mathcal{G}_\rmdp}_{\min} \rightarrow \Delta(S{^\mathcal{G}_\rmdp})$, we have
        \begin{equation*}
        \begin{split}
            |\delta^{\mathcal{G}_\rmdp}| &= \sum_{(s,a) \in S{^\mathcal{G}_\rmdp}_{\max} \times A{^\mathcal{G}_\rmdp}_{\max}} |\delta^{\mathcal{G}_\rmdp}(s,a)| \\
            &+ \sum_{((s,a),v) \in S{^\mathcal{G}_\rmdp}_{\min} \times A{^\mathcal{G}_\rmdp}_{\min}} |\delta^{\mathcal{G}_\rmdp}((s,a),v)|.
        \end{split}
        \end{equation*}
        For each $(s,a) \in S{^\mathcal{G}_\rmdp}_{\max} \times A{^\mathcal{G}_\rmdp}_{\max}$, we have that $\delta^{\mathcal{G}_\rmdp}(s,a)$ is a Dirac probability distribution so is of constant size. On the other hand, for each $((s,a),v) \in S{^\mathcal{G}_\rmdp}_{\min} \times A{^\mathcal{G}_\rmdp}_{\min}$, we have that $\delta^{\mathcal{G}_\rmdp}((s,a),v)$ is of the same size as the probability distribution stored in the vertex $v$ of the uncertainty polytope $V^{\rmdp}_{s,a}$. Hence, the second sum is of size $\mathcal{O}(|\uncert|^\rmdp)$ as it can be written as a nested sum over state-action pairs first and then over vertices of each uncertainty polytope, and we conclude that
        \[ |\delta^{\mathcal{G}_\rmdp}| \in \mathcal{O}(|S^\rmdp| \cdot |A^\rmdp| + |\uncert^\rmdp|), \]
        which is linear in the size of the RMDP. This is because $\uncert^\rmdp$ stores polytope vertices for each state-action pair in the RMDP so $|S^\rmdp| \cdot |A^\rmdp|$ is linear in the size of $|\uncert^\rmdp|$, as already noted above.

        \item {\em Reward function.} Since $r^{\mathcal{G}_\rmdp}: S{^\mathcal{G}_\rmdp}_{\max} \times A{^\mathcal{G}_\rmdp}_{\max} \cup S{^\mathcal{G}_\rmdp}_{\min} \times A{^\mathcal{G}_\rmdp}_{\min} \rightarrow \mathbb{R}$ is defined to coincide with $2 \cdot r^\rmdp$ on state-action pairs belonging to Max and is set to be $0$ on state-action pairs belonging to Min, we conclude that
        \[ |r^{\mathcal{G}_\rmdp}| \in \mathcal{O}(|r^{\rmdp}| + |S^{\rmdp}| \cdot |A^{\rmdp}|), \]
        which is linear in the size of the RMDP, as already noted above.

        \item {\em Initial state.} Since $s_0^{\mathcal{G}_\rmdp} = s_0^{\rmdp} \in S^{\mathcal{G}_\rmdp}_{\max}$, i.e. the initial state in $\mathcal{G}^\rmdp$ is the copy of the initial state in $\rmdp$, we have
        \[ |s_0^{\mathcal{G}_\rmdp}| \in \mathcal{O}(|s_0^{\rmdp}|), \]
        which is linear in the size of the RMDP.
    \end{compactitem}
    Hence, the size of the TBSF $\mathcal{G}_\rmdp$ is linear in the size of the RMDP $\rmdp$, as claimed.
\end{proof}

\section{Extending Theorem~2 to Discounted Payoff}\label{app:discounted}

A slight modification of the reduction described in Section~3 allows reducing discounted-sum reward polytopic RMDPs to discounted-sum reward TBSGs: we just omit the doubling of reward functions and define 
\[ r^{\mathcal{G}_\rmdp}(s,a) = r^{\rmdp}(s,a). \]
for each state-action pair \(s,a\). However, the fact that two turns in \(\mathcal{G}_\rmdp\) (one turn per player) correspond to a single step in \(\rmdp\) poises another issue: in \(\mathcal{G}_\rmdp\), the discounting proceeds twice as fast compared to \(\rmdp\). A mathematically simple solution to this issue is to define the discount factor in \(\mathcal{G}_\rmdp\) to be the square root of the original discount factor \(\gamma\) from \(\mathcal{G}_\rmdp\). However, this is problematic from an algorithmic point of view, since \(\sqrt{\gamma}\) might be irrational. Hence, we propose a different modification of the reduction: instead of handling this reduction with irrational discount factor, we simply consider it as two-level discounted-sum game, where discounting happens at every alternate step. Such games have the same complexity and algorithms as discounted-sum games~\cite{ChatterjeeM12}. This gives rise to a linear-time reduction from discounted-sum reward polytopic RMDPs to discounted-sum reward TBSGs. Hence, Corollaries~1--3 hold also for RMDPs with discounted payoff objectives.

\section{Treewidth of TBSGs Reduced from RMDPs}
\label{app:treewidth}
We refer the reader to \cite{Chatterjee23} for background on treewidth and tree-decompositions. Let $\rmdp=(S^\rmdp, A^\rmdp, \uncert^\rmdp, \rew^\rmdp, s_0^\rmdp)$ be an RMDP with $m$ actions and treewidth $t$. We show that the treewidth of $\mathcal{G}_\rmdp$ is at most $(t+1)(m+1)-1$. 

\begin{proof}
    Let $T$ be a tree-decomposition of $\rmdp$ with width $t$. We construct a tree-decomposition $T'$ of $\mathcal{G}_\rmdp$ as follows: For each bag $B$ in $T$, let $B'=B \cup \{(v,a)|v \in B, a \in A^\rmdp\}$ be a bag of $T'$. Moreover, put an edge between two bags $B_1', B_2'$ in $T'$ if an only if there was an edge between $B_1, B_2$ in $T$. We verify that $T'$ is indeed a tree-decomposition of $\mathcal{G}_\rmdp$ with width $(t+1)(m+1)-1$. 

    \begin{compactitem}
        \item (Union of Bags) By construction, every node $v$ of $\rmdp$ and every node $(v,a)$ of $\mathcal{G}_\rmdp$ appear in some bag of $T'$. 
        \item(Connectivity of Node Appearance) The tree structure of $T'$ is similar to $T$. Every node $v$ of $\rmdp$ appeared in a connected sub-tree of $T$, based on how we constructed the bags of $T'$ it follows that every node $v$ appears in a connected sub-tree of $T'$ as well. Moreover, every bag that contains $v$, also contains all the pairs $(v,a)$ for $a \in A$ and every bag that contains $(v,a)$ also contains $v$. Therefore, every node $(v,a)$ appears in the same connected sub-tree of $T'$ as $v$. 
        \item (Existence of Edges) Every edge $\{v,(v,a)\}$ where $a \in A$, clearly appears in a bag containing $v$. Consider an edge of the form $\{(v,a), u\}$. Existence of such an edge implies that by action $a$ the RMDP could move from $v$ to $u$. Therefore, $v$ and $u$ were adjacent in the graph of $\rmdp$ and there exists a bag $B$ in $T$ that contains both $v,u$. By construction, the bag $B'$ contains $u$ and $(v,a)$.
        \item (Width) Let $B$ be a bag of $T$ containing at most $t+1$ nodes. For each $v \in B$, we add $m$ new nodes to $B'$. Therefore, $|B'| \leq (t+1)(m+1)$ and the width of $T'$ is at most $(t+1)(m+1)-1$.
    \end{compactitem}

\end{proof}

\end{document}